\title{Towards Causal Representation Learning with Observable Sources as Auxiliaries}
\author{
    Kwonho Kim\textsuperscript{\rm1},
    Heejeong Nam\textsuperscript{\rm2},
    Inwoo Hwang\textsuperscript{\rm3},
    Sanghack Lee\textsuperscript{\rm1}
}
\begin{document}

\maketitle
\begin{abstract}
Causal representation learning seeks to recover latent factors that generate observational data through a mixing function. Needing assumptions on latent structures or relationships to achieve identifiability in general, prior works often build upon conditional independence given known auxiliary variables. However, prior frameworks limit the scope of auxiliary variables to be external to the mixing function. Yet, in some cases, system-driving latent factors can be easily observed or extracted from data, possibly facilitating identification.
In this paper, we introduce a framework of observable sources being auxiliaries, serving as effective conditioning variables. Our main results show that one can identify entire latent variables up to subspace-wise transformations and permutations using volume-preserving encoders. Moreover, when multiple known auxiliary variables are available, we offer a variable-selection scheme to choose those that maximize recoverability of the latent factors given knowledge of the latent causal graph. 
Finally, we demonstrate the effectiveness of our framework through experiments on synthetic graph and image data, thereby extending the boundaries of current approaches.

\end{abstract}

\section{Introduction}
\label{sec:intro}

Understanding the underlying generative process of observations is crucial for scientific discovery. In this context, causal representation learning (CRL) \citep{9363924}, including nonlinear independent component analysis (ICA) \citep{hyvarinen2009independent}, aims to recover latent variables from observed data. This approach holds great promise for applications in areas such as healthcare \citep{sanchez2022causal}, climate science \citep{yao2024marrying}, and recommendation \citep{wang2022causal, wang2024causally, yang2024disentangled}, as understanding the causal mechanisms can lead to better interpretability and improved generalization to new settings.

However, causal representation learning faces challenges in the absence of appropriate supervision or inductive biases, remaining vulnerable to infinitely many spurious solutions \citep{10.1016/S0893-6080(98)00140-3, 47692}.
To ensure identifiability of the generative process, an established line of research introduces the assumption that the data-governing latent sources are conditionally independent given the auxiliary variables $\rvu$ \citep{da26a48db8f94dc4b880adcaae51a28e, pmlr-v108-khemakhem20a} as illustrated in Fig.~\ref{fig1:graph_a}.

Although this assumption enables identifiability, its key insight lies in encouraging the model to disregard dependencies among the underlying sources by conditioning. Consequently, the conditioning (or auxiliary) variable should be positioned outside the latent space that the model aims to recover. This imposes two strong constraints for applicability in real-world settings: (1) access to external labels, and (2) these labels should render all latent sources conditionally independent. While existing works \citep{lu2022invariant, zheng2023generalizing} provide alternative ways to bypass such conditional independence assumptions, the case of considering a known latent variable has not yet been explored.

However, in some cases, some latent variables can be readily observed or inferred from the available data. This could potentially facilitate the identification of unobserved latent variables by providing direct or partial insights into the generative structure, thereby reducing the overall complexity involved in recovering the true latent representations. One illustrative case is that of a robotic arm carrying out a manipulation task. The underlying latent sources may correspond to various physical parameters, such as the joint angles, torques, or the forces exerted by the arm, while the observed variables could consist of camera images capturing the robot's movements. In this context, we  can treat arm angle information directly extracted from image data as observable sources, which provide only partial information about the true latent variables governing the system.



Moreover, scientific systems like robotic arms governed by physical laws can often be represented using causal graphs \citep{10.5555/3023638.3023683, baumann2022identifying}. In such cases, the conditional independence relations implied by the graph (via d-separation) can reveal which subsets of latent variables are identifiable. For example, fixing the angle of a specific joint in a robotic arm can render the movements of the joints before and after it conditionally independent.

When considering the causal graph as more general data-generating process, the conditional independence between the latents can vary depending on which variables are conditioned. Thus, selecting proper auxiliary variables can determine the degree of identifiability. However, this topic, i.e., \textit{how to exploit/select auxiliary variables leveraging graphical information}, has remained unexplored in recent studies.

To this end, we propose a new framework in which one or more true latent variables are assumed to be known or observable from the data. Sec. 2 reviews how prior studies have attempted to bypass strong conditional independence assumptions. In Sec. 3, we formalize the problem, introduce the notations, and specify the setting. Sec. 4 investigates the extent to which recoverability and identifiability can be achieved. Our main result is that the system can be identified at most up to subspace-wise transformations and permutations when the Jacobian matrix of the mixing function is properly designed to have a determinant with absolute value equal to one at every point. In the case where two or more variables are known, we offer a variable-selection scheme to choose those that maximize the recoverability of the latent factors. 
In Sec. 5, we present a volume-preserving, flow-based model that adheres to this condition. We validate our approach across various experimental settings, demonstrating that the representation effectively disentangles according to the conditional independence structure of the latent graph.


    
    

\section{Related Work}
\label{sec:related-work}

One of the key obstacles in CRL is the dependence among latent sources induced by underlying causal mechanisms. It directly violates the assumption of conditionally independent sources, which underlies the identifiability of many nonlinear ICA approaches that rely on conditionally factorized priors \citep{pmlr-v108-khemakhem20a}. To address this issue, several works explicitly incorporate a known or assumed causal graph over the latent variables to model source dependencies. For example, \citet{yang2021causalvae} (CausalVAE) propose a structured variational autoencoder where the latent variables follow a predefined causal DAG, enabling do-interventions in the latent space. Similarly, \citet{pan2024counterfactual} (ANCM) handle non-Markovian generative processes by modeling image generation with an augmented causal graph that captures temporally entangled latent factors. While these methods provide a framework for incorporating causal structure into representation learning, they operate under a fully supervised setting, assuming access to structured semantic labels or ground-truth causal factors. Moreover, they are primarily focused on image generation and counterfactual editing tasks, rather than the general identifiability or recovery of latent sources from more weakly supervised or observational data. 

To achieve identifiability under such dependencies, many methods rely on interventional data which can be impractical in real-world settings \citep{lippe2023causal, liang2023causal, ali2024crid}. In particular, \citet{liang2023causal} (CauCa) assumes a Markovian graph and leverages interventions for identifiability, while \citet{ali2024crid} (CRID) handles more general non-Markovian settings by explicitly modeling unobserved confounders. Both of CauCA and CRID share with our approach the use of causal graph to guide recovery, suggesting that our method could be extended to non-Markovian settings in future work.

As an alternative, recent efforts have aimed to prove identifiability from observational data alone. \citet{yao2024multiview} introduce a method based on block-identifiability \citep{gelgen2021selfsupervised}, which extracts shared latent variables from multiple views using contrastive learning and entropy regularization. \citet{zhang2024causal} show that structural sparsity among the sources enables identifiability without any explicit causal graph. While these works relax assumptions on data collection, they rely on indirect structural constraints. In contrast, we investigate how to select or exploit observed sources as auxiliary variables under a known causal structure to recover latent sources. This approach retains the strengths of causal modeling while improving recoverability in settings where full interventions or disentangled views are unavailable.

\section{Preliminary}
\label{sec:preliminary}

To frame our problem setting, we denote random variables or vectors by uppercase letters and their assignments by lowercase throughout this paper. Bold letters represent a set of random variables or random variables which is not a singleton. We write $[d]$ to denote the set $\{1, 2, \cdots, d\}$.

\subsection{Data Generating Process}
We define $\*{x} \in \mathbb{R}^m$ as an observation (e.g., image) which are generated from latent sources $\rvz \in \mathbb{R}^n$ as follow:
\begin{equation}\label{eqn1:generation}
    \rvx=g(\rvz).
\end{equation}
where $g$, a mixing function, is an arbitrary invertible and smooth nonlinear function in the sense that its second-order derivatives exist.
By adopting a Bayesian network, we represent a data-generating process regarding latent sources as
\begin{equation}\label{eqn2:source dep}
    z_i = f_i(\Pa{z_i}, \epsilon_i), \quad \epsilon_i \sim p_{\epsilon_i}, 
\end{equation}
for all $i \in [n]$ 
where $\Pa{\cdot}$ represents parent nodes on a known causal graph $\mathcal{G}$ consisting of nodes $V$ and edges $E$.

\begin{figure}[t]
  \centering
  \begin{subfigure}[b]{0.21\textwidth}\centering
    \captionsetup{justification=centering}
    \begin{tikzpicture}[x=.9cm,y=6mm,scale=0.75,transform shape,font=\normalsize]
      \node[RR, minimum size=7mm][fill=betterblue!50] (u1) at (0,2) {$u$};
      \node[RR, minimum size=7mm] (z1) at (-1.5,0) {$z_1$};
      \node[RR, minimum size=7mm] (z2) at (-0.5,0) {$z_2$};
      \node[RR, minimum size=7mm] (z3) at (0.5,0) {$z_3$};
      \node[RR, minimum size=7mm] (z4) at (1.5,0) {$z_5$};
      \draw[->] (u1) -- (z1);
      \draw[->] (u1) -- (z2);
      \draw[->] (u1) -- (z3);
      \draw[->] (u1) -- (z4);
    \end{tikzpicture}
    \caption{Single auxiliary variable for ICA}
    \label{fig1:graph_a}
  \end{subfigure}
  \hspace{1em}
  \begin{subfigure}[b]{0.21\textwidth}\centering
    \captionsetup{justification=centering}
    \begin{tikzpicture}[x=.9cm,y=6mm,scale=0.75,transform shape,font=\normalsize]
      \node[RR, minimum size=7mm][fill=betterblue!50] (u1) at (0,2) {$u$};
      \node[RR, minimum size=7mm] (z1) at (-1.5,0) {$z_1$};
      \node[RR, minimum size=7mm] (z2) at (-0.5,0) {$z_2$};
      \node[RR, minimum size=7mm] (z3) at (0.5,0) {$z_3$};
      \node[RR, minimum size=7mm] (z4) at (2,0) {$z_5$};
      \draw[->] (u1) -- (z1);
      \draw[->] (u1) -- (z2);
      \draw[->] (u1) -- (z3);
      \draw[->] (z3) -- (z4);
    \end{tikzpicture}
    \caption{Single auxiliary variable for ISA}
    \label{fig1:graph_b}
  \end{subfigure}

  \vspace{1em} 

  \begin{subfigure}[b]{0.21\textwidth}\centering
    \captionsetup{justification=centering}
    \begin{tikzpicture}[x=.9cm,y=6mm,scale=0.75,transform shape,font=\normalsize]
      \node[RR, minimum size=7mm] (u1_c) at (8,2) {$z_3$};
      \node[RR, minimum size=7mm] (z1_c) at (6.5,0) {$z_1$};
      \node[RR, minimum size=7mm][fill=betterblue!50] (z2_c) at (8,0) {$z_5$};
      \node[RR, minimum size=7mm] (z3_c) at (9.5,0) {$z_4$};
      \node[RR, minimum size=7mm] (z5_c) at (6.5,2) {$z_2$};
      \draw[->] (u1_c) -- (z2_c);
      \draw[->] (u1_c) -- (z3_c);
      \draw[->] (z2_c) -- (z1_c);
      \draw[->] (z2_c) -- (z3_c);
      \draw[->] (z2_c) -- (z5_c);
    \end{tikzpicture}
    \caption{\textit{Ours. }Single observable source for ISA}
    \label{fig1:graph_c}
  \end{subfigure}
  \hspace{1em}
  \begin{subfigure}[b]{0.21\textwidth}\centering
    \captionsetup{justification=centering}
    \begin{tikzpicture}[x=.9cm,y=6mm,scale=0.75,transform shape,font=\normalsize]
      \node[RR, minimum size=7mm][fill=betterblue!50] (z4_c) at (8,2) {$z_4$};
      \node[RR, minimum size=7mm] (z1_c) at (6.5,0) {$z_1$};
      \node[RR, minimum size=7mm][fill=betterblue!50] (z2_c) at (8,0) {$z_2$};
      \node[RR, minimum size=7mm] (z3_c) at (9.5,0) {$z_3$};
      \draw[->] (z4_c) -- (z3_c);
      \draw[->] (z1_c) -- (u1_c);
      \draw[<-] (z2_c) -- (z1_c);
      \draw[<-] (z2_c) -- (z3_c);
    \end{tikzpicture}
    \caption{\textit{Ours. } Multiple observable sources for ISA}
    \label{fig1:graph_d}
  \end{subfigure}

  \caption{Latent mechanisms in different setups leveraging conditional independence. Blue nodes denote observables.}
  \label{fig1}
\end{figure}
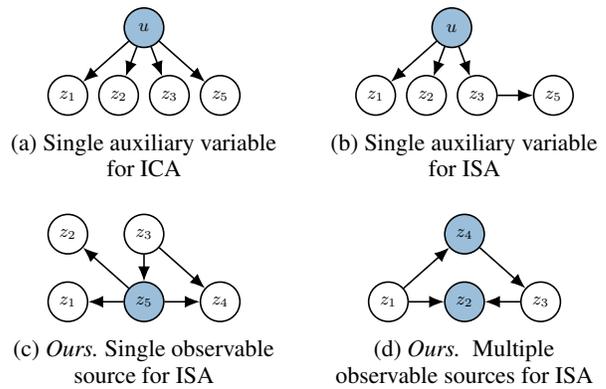

\subsection{Leveraging Conditional Independence}

The primary goal of CRL is to recover inverse mapping $g^{-1}$ and the true latent sources $\rvz=(z_1, \cdots, z_n)$ from observations.
The simplest setup might consider fully independent sources as in nonlinear ICA with latent probabilities of Eq.~\ref{eq:source-factorization-independent}. 
\begin{equation}\label{eq:source-factorization-independent}
p(\rvz)=\prod_{i=1}^n p(z_i).
\end{equation}

However, with only i.i.d. samples, this problem is provably unidentifiable: one cannot uniquely map observations to mutually independent sources under that assumption alone \citep{10.1016/S0893-6080(98)00140-3}. To ensure identifiability, many researchers have relied on conditional independence assumption given observable variables. 

Fig.~\ref{fig1:graph_a} depicts the setup of CRL with auxiliary variables: the latent sources \(\mathbf{z}=(z_1,z_2,z_3,z_4)\) are mutually independent by conditioning an observable auxiliary variable \(u\), i.e.
\begin{equation}\label{eq:source-factorization-independent-auxiliary}
p(\rvz\mid \rvu)=\prod_{i=1}^n p(z_i\mid \rvu),
\end{equation}
where $\rvu$ can be a class label, time index, or historical information \citep{pmlr-v89-hyvarinen19a}. 

Fig.~\ref{fig1:graph_b} similarly assumes that the observable variable is given as auxiliary, lying outside of the data generating process. However it falls under Independent Subspace Analysis (ISA), which focuses on recovering only up to the subspace that becomes independent under conditioning. Rather than identifying all latent sources, ISA relaxes the assumption. This can be written as
\begin{equation}\label{eq:subspace-factorization-isa}
p(\rvz \mid \rvu) = \prod_{i=1}^{d} p(\rvz_{c_i} \mid \rvu),
\end{equation}
where $\{\rvz_{c_i}\}_{i=1}^d$ is a partition of the latent sources such that $\cup_{i=1}^{d} \rvz_{c_i} = \{z_1,\dots,z_n\} \setminus \{z_o\}$, and each $\rvz_{c_i}$ denotes a conditionally dependent subspace. 

Fig.~\ref{fig1:graph_c} and Fig.~\ref{fig1:graph_d} depict our setting. Unlike Fig.~\ref{fig1:graph_a} and Fig.~\ref{fig1:graph_b}, the observable variables directly participate in the data generating process. These cases also follow the ISA setting, where full conditional independence among latent sources is not required. Concretely, we treat a subset of the observed latent sources $\rvz_o \subset \rvz$  as conditioning variables that directly influence the mixing. Their generative process is then captured by DAG \(\mathcal{G}\), which encodes arbitrary dependencies among all sources.

Accordingly, Eq. \ref{eq:subspace-factorization-isa} is reformulated in our setting as:

\begin{equation}\label{eq:source-factorization-partial}
p_{\rvz_{o^-} | \rvz_o}(\rvz_{o^-} | \rvz_o) =  \prod_{j=1}^{d} p_{\rvz_{c_j}| \rvz_o}(\rvz_{c_j}| \rvz_o),
\end{equation}
where $\rvz_o$ is observed sources and $\rvz_{o^-}$ is unobserved sources.

Note that we exclude the degenerate case $\mathbf{z}_o = \emptyset$ (i.e., no auxiliary variables), since without any conditioning the problem falls into a different regime that demands extra assumptions for identifiability, such as structural sparsity \citep{zheng2023generalizing}. We defer the discussion of how to handle cases where $\mathbf{z}_o$ contains more than one element to Section~\ref{sec:method-selection}.

\subsection{Problem Formulation}\label{pre:problem}

Suppose a data generating process in Eq.~\ref{eqn1:generation} and Eq.~\ref{eqn2:source dep}. 
Our goal is to establish the identifiability of the independent latent sources (i.e., $\rvz_{c_i}$) up to certain subspace-wise invertible transformation and permutation, given the observations $\rvx$, observable sources $\rvz_o (\subseteq \rvz)$, and the latent Bayesian network $\mathcal{G}$ which encodes the conditional independence relationships between the latent sources as shown in Fig.~\ref{fig1}.

We assume that the latent Bayesian network $\mathcal{G}$ is known, which allows us to leverage diverse conditional independence relationships between the sources.
Importantly, the partition of the latent sources into subspaces $\rvz_{o^-}=\cup_i \rvz_{c_i}$ determines the \textit{degree} of the identifiability we could achieve (Thm. 4.3 of \citet{zheng2023generalizing}).
Therefore, it is crucial to capture the proper observable sources $\rvz_u\subseteq \rvz_o$ that entails fine-grained subspaces $\rvz_{c_j}$ mutually independent to each other conditioned on $\rvz_u$.

\paragraph{Motivating Examples}
While in simple cases such as Fig.~\ref{fig1:graph_c}, it is straightforward to find the most fine-grained conditionally independent groups, i.e. ${\{z_1\},\,\{z_2\},\, \{z_4,z_5\}}$. 
However, considering the case where the multiple sources are observed as Fig.~\ref{fig1:graph_d}; if all observed sources are conditioned, $z_1$ and $z_3$ cannot be disentangled. $z_2$ should not be conditioned to satisfy conditional independence.

If the number of sources or observed sources increases, finding auxiliary variables that make the latent grouping fine-grained can become computationally challenging. It is true that a problem of which node to condition on typically arises in collider structures, and even the same node can act as a confounder for one group of nodes and as a collider for another. In such case, it becomes necessary to carefully choose which nodes to condition on in order to achieve optimal results.

\section{Method}
\label{sec:method}

In this section, we establish identifiability in the presence of observable sources (Sec.~\ref{sec:method-identifiability}). Based on conditions for identifiability, we introduce a framework with a graphical criterion to effectively leverage auxiliary variables that makes the conditionally independent latents more fine-grained (Sec.~\ref{sec:method-selection}) and method to recover unobserved latents (Sec.~\ref{sec:method-architecture}).

\subsection{Identifiability}
\label{sec:method-identifiability}

To deal with problems that the observed sources $\rvz_o$ are included in the mixing function, we assume that the mixing function is constrained to a specific form as \citet{yang2022nonlinear}.
We adopt the proof of \citet{zheng2023generalizing} for identifiability with dependent sources.

\begin{proposition}\label{prop:partial}
    (Identifiability with observable sources). Suppose the following assumptions hold:
    
        
        
    \begin{enumerate}
        \item The observed data and sources are generated from Eq.~\ref{eqn1:generation} and Eq.~\ref{eq:source-factorization-partial}.
        
        \item The mixing function $g$ is volume-preserving, i.e., $|\det(\mathbf{J}_g(\rvz))| = 1$.
        
        \item For every value of $\rvz_{o^-}$, there exist $2d$ values of $\rvz_o$ such that the $2d$ vectors 
        $\rvw(\rvz_{o^-}, \rvz_{o_i})$ are linearly independent, where each vector is defined as
        \begin{align*}
            \rvw(\rvz_{o^-}, \rvz_{o_i}) = \big( 
            &\rvv(\rvz_{c_1}, \rvz_{o_i}), \dots, \rvv(\rvz_{c_d}, \rvz_{o_i}), \\
            &\rvv'(\rvz_{c_1}, \rvz_{o_i}), \dots, \rvv'(\rvz_{c_d}, \rvz_{o_i}) 
            \big).
        \end{align*}
    \end{enumerate}
    where {\footnotesize
        \begin{align*}
        \rvv(\rvz_{c_j}, \rvz_{o_i}) &= \left(\! \frac{\partial \log p(\rvz_{c_j} | \rvz_{o_i})}{\partial z_{c_j}^{(l)}}, \dots, \frac{\partial \log p(\rvz_{c_j} | \rvz_{o_i})}{\partial z_{c_j}^{(h)}} \!\right), \\
        \rvv'(\rvz_{c_j}, \rvz_{o_i}) &= \left(\! \frac{\partial^2\! \log p(\rvz_{c_j} | \rvz_{o_i})}{\partial (z_{c_j}^{(l)})^2},  \dots, \frac{\partial^2\! \log p(\rvz_{c_j} | \rvz_{o_i})}{\partial (z_{c_j}^{(h)})^2} \!\right)
        \end{align*}} and $\rvz_{c_j} = (z_{c_{j}^{(l)}},\dots,z_{c_{j}^{(h)}})$.

    Then all the components of $\rvz_{o^-}$ (i.e., $\rvz_{c_i}$ where $c_i \in \{c_1,\dots,c_d\}$) is identifiable up to a subspace-wise invertible transformation and a subspace-wise permutation.
\end{proposition}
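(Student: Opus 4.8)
The plan is to run the standard two-model reparameterization argument for nonlinear ISA identifiability (as in \citet{pmlr-v108-khemakhem20a} and \citet{zheng2023generalizing}), adapted to the fact that the conditioning variable $\rvz_o$ lives \emph{inside} the mixing rather than being an external label. Let $(g,\rvz)$ and $(\hat g,\hat\rvz)$ be two generative models satisfying the three assumptions and inducing the same observed conditional law $p(\rvx\mid\rvz_o)$ (which is available since $\rvx$ and $\rvz_o$ are jointly observed). Because $\rvz_o$ is observed, any valid model must reproduce it, so we may take $\hat\rvz_o=\rvz_o$; hence the reparameterization $h:=\hat g^{-1}\circ g$ fixes the $\rvz_o$-coordinates, $h(\rvz_o,\rvz_{o^-})=(\rvz_o,\,h_{o^-}(\rvz_o,\rvz_{o^-}))$, with $h_{o^-}(\rvz_o,\cdot)$ a smooth diffeomorphism of $\rvz_{o^-}$-space for every $\rvz_o$. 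Its Jacobian is block lower-triangular with an identity block on $\rvz_o$, so $|\det\mathbf{J}_h|=|\det(\partial h_{o^-}/\partial\rvz_{o^-})|$, and volume-preservation of $g$ and $\hat g$ (the second assumption) forces $|\det\mathbf{J}_h|=1$; thus $h_{o^-}(\rvz_o,\cdot)$ is itself volume-preserving. Change of variables then gives
\[
\log p_{\rvz_{o^-}\mid\rvz_o}(\rvz_{o^-}\mid\rvz_o)=\log\hat p_{\hat\rvz_{o^-}\mid\rvz_o}\big(h_{o^-}(\rvz_o,\rvz_{o^-})\mid\rvz_o\big),
\]
and applying the subspace factorization of Eq.~\ref{eq:source-factorization-partial} on both sides yields $\sum_{j}\log p(\rvz_{c_j}\mid\rvz_o)=\sum_{k}\log\hat p(\hat\rvz_{\hat{c}_k}\mid\rvz_o)$.

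Next I would extract the block structure of $h_{o^-}$ by differentiating this identity. Pick coordinates $z^{(s)},z^{(t)}$ of $\rvz_{o^-}$ in two \emph{distinct} true subspaces $\rvz_{c_j}\neq\rvz_{c_{j'}}$; the left-hand side splits over subspaces, so its mixed derivative $\partial^2/\partial z^{(s)}\partial z^{(t)}$ vanishes identically. Expanding the right-hand side by the chain rule rewrites this mixed derivative as a sum, over output subspaces $\hat{c}_k$ and their coordinates, of products of off-diagonal entries of $\mathbf{J}_{h_{o^-}}$ (i.e.\ $\partial\hat{z}^{(l)}/\partial z^{(s)}$ and $\partial\hat{z}^{(l')}/\partial z^{(t)}$) against the first- and second-order log-derivatives of $\hat p(\hat\rvz_{\hat{c}_k}\mid\rvz_o)$, plus Hessian-of-$h_{o^-}$ terms. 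I would then evaluate this relation at the $2d$ values of $\rvz_o$ supplied by the third assumption while holding $\rvz_{o^-}$ fixed: this produces a homogeneous linear system in the Jacobian/Hessian factors whose coefficient matrix is built from the $2d$ vectors $\rvw(\rvz_{o^-},\rvz_{o_i})$, and their linear independence forces the solution to vanish — in particular every cross-block Jacobian product $\partial\hat{z}^{(l)}/\partial z^{(s)}\cdot\partial\hat{z}^{(l')}/\partial z^{(t)}$ that would couple the two distinct true subspaces through a shared $\hat{c}_k$. Since $h_{o^-}(\rvz_o,\cdot)$ is a volume-preserving diffeomorphism, a connectivity/counting argument on which input block feeds which output block then gives a bijection $\sigma$ between the two partitions with $\dim\rvz_{c_j}=\dim\hat\rvz_{\hat{c}_{\sigma(j)}}$ and $\hat\rvz_{\hat{c}_{\sigma(j)}}=t_j(\rvz_{c_j};\rvz_o)$ for an invertible $t_j$.

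This is exactly the claimed conclusion: each independent block $\rvz_{c_j}$ is recovered up to a subspace-wise invertible transformation and a permutation of blocks. I expect the main obstacle to be the bookkeeping in the second step — writing the mixed cross-derivative of the right-hand side so that the resulting coefficients align precisely with the $\rvv$ and $\rvv'$ entries that make up $\rvw$, and then arguing that annihilating all cross-block Jacobian \emph{products} (rather than individual entries) still forces a clean block-permutation map. This is where the factor $2d$ (one $\rvv$ and one $\rvv'$ per subspace) and the non-degeneracy in the third assumption do the real work, and where the diffeomorphism-plus-volume-preservation structure is needed to rule out partial mixing between subspaces of unequal dimension.
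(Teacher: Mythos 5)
Your overall architecture is the same as the paper's: two observationally equivalent models, the reparameterization $h=\hat g^{-1}\circ g$ fixing the $\rvz_o$-coordinates, volume preservation of $g$ and $\hat g$ killing the Jacobian log-determinant (this is exactly why Assumption~2 is needed, since the determinant can no longer cancel by differencing when $\rvz_o$ enters the mixing), subspace factorization of both conditional log-densities, mixed second derivatives across subspaces, and a $2d\times 2d$ full-rank linear system at the $2d$ values of $\rvz_o$ forcing the cross-block Jacobian products and Hessian entries to vanish.

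There is one concrete gap: you differentiate the identity $\sum_j\log p(\rvz_{c_j}\mid\rvz_o)=\sum_k\log\hat p(\hat\rvz_{\hat c_k}\mid\rvz_o)$ with respect to coordinates $z^{(s)},z^{(t)}$ lying in two distinct \emph{true} subspaces, so the true side vanishes and the surviving linear system has coefficients given by the first- and second-order log-derivatives of the \emph{estimated} densities $\hat p(\hat\rvz_{\hat c_k}\mid\rvz_o)$. Assumption~3, however, is a non-degeneracy condition on the \emph{true} conditionals $p(\rvz_{c_j}\mid\rvz_{o_i})$ --- the vectors $\rvw(\rvz_{o^-},\rvz_{o_i})$ are built from $\rvv$ and $\rvv'$ of the true model --- so it says nothing about the rank of the coefficient matrix you obtain, and you cannot conclude that the cross-block products vanish. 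The paper avoids this by differentiating with respect to $\hat z_k,\hat z_v$ in two distinct \emph{estimated} subspaces: then the estimated side vanishes (it factorizes over the estimated partition), and the surviving system on the true side has unknowns $\frac{\partial z_l}{\partial\hat z_k}\frac{\partial z_l}{\partial\hat z_v}$ and $\frac{\partial^2 z_l}{\partial\hat z_k\partial\hat z_v}$ multiplied precisely by the entries of $\rvw$, so Assumption~3 applies directly. Flipping the roles of the two coordinate systems in your second step (and correspondingly working with $\mathbf{J}_{h^{-1}}$ rather than $\mathbf{J}_{h}$) repairs the argument and reduces it to the paper's proof; the final connectivity/counting step you sketch to upgrade ``no cross-block coupling'' to a block permutation is then the same as the paper's concluding paragraph.
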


Most prior works on CRL achieve identifiability by assuming the mixing function is fixed across environments. Under a common invertible mixing $g$, one can write the log-likelihood of the data under two domains and subtract them, causing the Jacobian log‐determinant terms to cancel (since the same $g$ applies in both cases). In such settings the latent distributions change across domains while $g$ remains invariant, so the log-determinant terms disappear when differencing log-likelihoods. 

In our setting, by contrast, the observed source (domain label) is used to index the mixing function, so $g$ varies with the source. As a result the usual cancellation does not occur and the standard identifiability proof breaks down. To address the problem, we constrain the mixing to be volume-preserving (i.e., $|\text{det}(\mathbf{J}_g(\rvz))| = 0$ everywhere). With the volume-preserving assumption on the mixing function, the Jacobian determinant remains constant at 1, making the log-determinant term equal to zero (the proof in Appendix~\ref{appx:id}).

\begin{remark} \label{remark:generalization} 
(Connection between auxiliary variables and observable sources). 
We can consider existing approaches with auxiliary variables (e.g. Fig.~\ref{fig1:graph_a}) as the case that observed sources $\rvz_o$ do not have edges into the observations $\rvx$. Although the assumption we impose on the mixing function to ensure identifiability is quite strict, it can cover the identifiability of the existing settings. See Proposition~\ref{prop:auxiliary} in Appendix~\ref{appx:id}. 
\end{remark}



\subsection{Selecting Observables for Conditioning}
\label{sec:method-selection}

According to the Proposition~\ref{prop:partial}, the conditional independence determines the number of recoverable sources in the identifiability of latent variables and our goal is first to capture mutually independent groups of nodes given observable sources and the known causal graph.
However, a naive approach of leveraging all observed sources might not capture conditional independence relationships, i.e., $z_1 \not \Perp z_3 \mid z_2, z_4$ in Fig.~\ref{fig1:graph_d}.
It is necessary to capture a proper subset of observed sources that entails the most fine-grained groups of mutually independent sources, and ultimately, leads to the most granular identifiability.

Formally, we aim to discover a conditional independence structure that partition $\rvz_{o^-}$ into the most fine-grained subgroups such that:
\begin{equation}\label{eq:conditional-independence-cluster}
    \rvz_{c_i} \Perp \rvz_{c_j} \mid \rvz_u, \quad \text{for all } i\neq j,
\end{equation}
where $\rvz_u\subseteq \rvz_o$, $\cup_i \rvz_{c_i} \subseteq \rvz\setminus\rvz_o$, and $\rvz_{c_i}\cap \rvz_{c_j} = \emptyset$ for all $i\neq j$.
Importantly, satisfying a fine-grained conditional independence condition enables the identification of a greater number of latent variables. This ensures a more precise disentanglement of the underlying causal structure, improving recoverability and manipulability of the true latent factors.

We propose a strategy that selects the most fine-grained conditionally independent groups of the latents with the minimum set of observed sources in Algorithm~\ref{alg:selection}. The algorithm initializes a candidate set by including only nodes that act as confounders and excluding those that act solely as colliders, in order to account for nodes that may serve as both. The \textit{Partition} algorithm counts the number of groups that satisfy conditional independence by running \textit{Bayes-ball} \citep{10.5555/2074094.2074151} algorithm repeatedly. Finally, the algorithm outputs the conditioning set that results in the largest number of groups, i.e., the most fine-grained partitioning.

\begin{algorithm}[t]
\footnotesize
    \begin{algorithmic}[1]
    \STATE \textbf{Input}: graph $G$, observed set $O$
    \STATE \textbf{Output}: conditioning set $C$
    \STATE $C \leftarrow \{$nodes acting only as confounders on $G$ $\}$
    \STATE $O \leftarrow O \setminus \{$nodes acting only as colliders on $G$ $\}$
    \STATE $max \gets 0$
    \FOR{each subset $T \subseteq O$}
        \STATE $S \gets Partition(G, T, O)$
        \IF{$|S| > max$ \textbf{or} ($|S| = max$ \textbf{and} $|T| < |C|$)}
            \STATE $max \gets |S|$
            \STATE $C \gets T$
        \ENDIF
    \ENDFOR
    \STATE \textbf{return} $C$
    \end{algorithmic}
\caption{Selecting Observables for Conditioning}\label{alg:selection}
\end{algorithm}


\paragraph{Example}
Consider the latent graph in Fig.~\ref{fig1:graph_d}. Observed set $O = \{z_2, z_4\}$.
We will iterate all the subsets of $O$, i.e., $\{z_2\}, \{z_4\}, \{z_2,z_4\}$.\footnote{$\emptyset$ cannot be considered due to the identifiability condition.}  Firstly, with conditioning set $\{z_2\}$, the partition process is as follow:
\begin{enumerate}
    \item Started from $z_1$, the \textit{result} contains $z_1$. 
    \item \textit{Bayes-ball} algorithm get input as $G,\{z_2\}$ and \textit{result}.
    \item In the \textit{Bayes-ball}, path from $z_1$ to $z_3$ through $z_2$ cannot be d-separated because $z_2$ works as collider.
    \item The path from $z_1$ to $z_3$ also cannot be d-separated.
    \item The \textit{result} is $ \{\{z_1,z_3\} \}$ except for observed source $z_2$.
\end{enumerate}
With conditioning set $\{z_4\}$, by following same process, the \textit{result} will be $\{ \{z_1\},\{z_3\} \}$. The conditioning set $\{z_2,z_4\}$ makes the result to be $\{ \{z_1,z_3\} \}$. Hence, the selection result will be $\{z_4\}$ for the most fine-grained conditionally independent latents. 

\subsection{Learning to Recover}
\label{sec:method-architecture}
To construct a representation that satisfies the identifiability conditions in \ref{prop:partial}, we enforce volume preservation in the encoder by adopting  General Incompressible-flow Network (GIN) \citep{sorrenson2020disentanglementnonlinearicageneral} as our encoder. In addition to volume preservation, we also impose a graphical constraint via a structural neural network to preserve dependencies among latent variables that are not assumed to be independent, reflecting the known latent causal structure to strengthen disentanglement.

\paragraph{Volume-preservation} While GIN originally optimizes only the log-likelihood of the conditional distribution given the auxiliary variables, we factorize the log-likelihood of the distribution as follows:
\[
 \log p_{\hat{g}^{-1}}(\rvx) = \log p(\hat{\rvz}) =\log p(\rvz_u)+ \sum_i \log p(\hat{\rvz}_{u_{i}^-} \mid \rvz_u),
\]
where $\hat{\rvz}_{u_{i}^-}=\hat{\rvz}\setminus \hat{\rvz}_{u_{i}}$.
By factorizing the log-likelihood of the distribution, we can naturally address the issue that the information from the auxiliary variable is directly entangled with the observations. The preceding term will serve to absorb information about $\rvz_u$ from $\rvx$ while the latter term enforces the components of $\rvz_{u^-}$ to be independent given $\rvz_u$ by modeling them as a multivariate normal distribution with zero off-diagonal elements.

\paragraph{Graphical constraint} Besides, $\hat{\rvz}_{u^-}$ contain the information of sources that are observed but not selected (expressed as $\rvz_n$), i.e., $\hat{\rvz}_{u^-} = \{\hat{\rvz}_{o^-}, \hat{\rvz}_n\}$. We need to keep the relationship between $\hat{\rvz}_n$ and $\hat{\rvz}_{o^-}$ which is not independent (relationship between $z_2$ and ${z_1, z_3}$ in \ref{fig1:graph_d}). 

To deal with this problem, we leverage the structural neural net to enforce the relationship between $\hat{\rvz}_{o^-}$ and $\hat{\rvz}_n$. A structural neural network is designed based on the latent graph $\mathcal{G}$ and not selected label $\rvz_n$.
Specifically, $\rvz_n$ is predicted by arbitrary dimensions of $\hat{\rvz}_{u^-}$ working as parents of $\rvz_n$. 
Since we do not know exactly which dimension of the representation corresponds to which true latent variable, we rely only on the number of parents of $\rvz_n$. For example, in \ref{fig1:graph_d}, true $z_2$ is predicted by the certain dimension of the estimated representation given the other dimensions ($\hat{z}_1, \hat{z}_3$), naturally reflecting the causal structure. The full objective function is:
\begin{align}
\mathcal{L}(\theta) = \mathbb{E}_{(\rvx, \rvz_u, \rvz_n) \in \mathcal{D}} \Big[
    & \log p(\rvz_u) + \sum_i \log p(\hat{\rvz}_{u_{i}^-} \mid \rvz_u) \nonumber \\
    & + \log p(\rvz_n \mid \Pa{\rvz_n}) 
\Big].
\end{align}

\section{Experiment}
\label{sec:experiment}
We conduct experiments to empirically validate both the effectiveness of the selection procedure and the capability of our proposed architecture in leveraging observable sources.

\begin{figure}[t]\centering
\begin{subfigure}[b]{0.45\linewidth}\centering
    \begin{tikzpicture}[x=1.2cm,y=1.2cm,scale=0.65,transform shape,font=\small\sffamily]
        \node[draw, circle, fill=betterblue!50, text=white, minimum size=7mm] (p1) at (0, 1.5) {1};
        \node[draw, circle, fill=betterblue!50, text=white, minimum size=7mm] (p2) at (1.5, 1.5) {2};
        \node[draw, circle, minimum size=7mm] (p3) at (-0.3, 0) {3};
        \node[draw, circle, minimum size=7mm] (p4) at (1.8, 0) {4};

        \draw[->] (p1) -- (p3);
        \draw[->] (p1) -- (p4);
        \draw[->] (p2) -- (p3);
        \draw[->] (p2) -- (p4);

        \node[above=0.2cm of p1, xshift=-4mm] {{Pendulum Angle}};
        \node[above=0.2cm of p2, xshift=4mm] {{Light Position}};
        \node[below=0.2cm of p3] {{Shadow Length}};
        \node[below=0.2cm of p4] {{Shadow Position}};
    \end{tikzpicture}
    \caption{Pendulum System}
    \label{fig:pendulum}
\end{subfigure}
\hspace{0.1cm}
\begin{subfigure}[b]{0.45\linewidth}\centering
    \begin{tikzpicture}[x=1.2cm,y=1.2cm,scale=0.65,transform shape,font=\small\sffamily]
        \node[draw, circle, fill=betterblue!50, text=white, minimum size=7mm] (f1) at (0, 1.5) {1};
        \node[draw, circle, minimum size=7mm] (f2) at (1.5, 1.5) {2};
        \node[draw, circle, minimum size=7mm] (f3) at (-0.3, 0) {3};
        \node[draw, circle,fill=betterblue!50, text=white, minimum size=7mm] (f4) at (1.8, 0) {4};

        \draw[->] (f1) -- (f2);
        \draw[->] (f1) -- (f3);
        \draw[->] (f3) -- (f4);
        \draw[->] (f2) -- (f4);

        \node[above=0.2cm of f1] {{Ball Size}};
        \node[above=0.2cm of f2] {{Hole}};
        \node[below=0.2cm of f3] {{Water Height}};
        \node[below=0.2cm of f4] {{Water Flow}};
    \end{tikzpicture}
    \caption{Flow System}
    \label{fig:flow}
\end{subfigure}
\caption{Latent causal graphs for two systems. Colored nodes are observed sources: (a) Pendulum and (b) Flow.}
\label{fig:causal_graphs}
\end{figure}
\subsection{Experimental Setup} 
\paragraph{Data} Reflecting the setup of observable sources, we consider synthetic datasets generated from the three graphs in Fig.~\ref{fig1:graph_a}, ~\ref{fig1:graph_c} and ~\ref{fig1}:
$\mathcal{D} = \{ (\rvx^{(i)}, \rvz_o^{(i)})\}_{i=1}^N$,
where $N$ is the sample size and $\rvz_o^{(i)}$ is the observed sources corresponding to the data point $\rvx^{(i)}$. When we run our selection procedure given a graph to choose the best combination of the auxiliary variables, $\rvz_o$ will be partitioned into $\rvz_u$ and $\rvz_n$. 

The data was generated using a linear Structural Causal Model (SCM) where each variable is determined by a linear combination of its parents and an additive noise term:
\begin{equation}
X_i = \sum_{j \in \text{pa}(X_i)} \beta_{ij} X_j + \varepsilon_i,
\end{equation}
where $\beta_{ij}$ are sampled uniformly from $[0.5, 1.0]$, and $\varepsilon_i$ is the additive noise term with coefficient fixed to 1.0.

To further demonstrate the effectiveness of our method on high-dimensional data, we used the Pendulum and modified Flow datasets  \citep{yang2021causalvae}, which consist of structured, systematically sampled image data (Fig.~\ref{fig:causal_graphs}). The implementation details are in Appendix \ref{appx:imple}.

\paragraph{Metrics} After training the proposed method, we measure Disentanglement, Completeness, Informativeness (DCI) metric \citep{eastwood2018a} based on Mean Correlation Coefficient (MCC) matrix which is a widely accepted metric in the literature for measuring the degree of identifiability \citep{da26a48db8f94dc4b880adcaae51a28e}. We assess how well the learned representation aligns with the independence structure of the underlying graph.

Specifically, the MCC matrix is defined as:
\begin{equation}
\mathbf{MCC}_{ij} = \text{corr}(z_i, \hat{z}_j),
\end{equation}
where each entry $ \mathbf{MCC}_{ij}$ represents the Pearson correlation coefficient between the true latent variable \( z_i \) and the estimated $ \hat{z}_j$.  
The optimal permutation $\sigma^*$  is selected to maximize the total correlation, ensuring that each estimated latent variable is matched to the most similar true latent variable.

Based on the computed MCC matrix, we evaluate models with Disentanglement and Completeness among DCI metrics:
\begin{equation}
D = 1 - H(P_{i,\cdot}),
\end{equation}
\begin{equation}
C = 1 - H(P_{\cdot,j}),
\end{equation}
where \( P_{i,j} \) is the value from the MCC matrix, representing the contribution of the estimated latent variable \( \hat{z}_j \) to the true latent factor \( z_i \). The entropy function \( H(\cdot) \) measures the dispersion of importance values across dimensions, ensuring that a lower entropy corresponds to a more structured and disentangled representation.
Disentanglement (\( D \)) quantifies whether each estimated latent variable captures at most one true latent factor, computed by applying row-wise entropy over \( P_{i,\cdot} \). Completeness (\( C \)) assesses whether each true latent factor is captured by a single estimated latent variable, computed via column-wise entropy over \( P_{\cdot,j} \). Since both scores range from 0 to 1, higher values indicate better structured representations with minimal mixing between factors. Further discussions on why MCC alone is insufficient for evaluation are provided in Appendix \ref{apx:discussion}. All the metrics are measured over 20 repetitions.

\subsection{Empirical Results}
\paragraph{Effectiveness of selection}
\begin{figure}
    \centering
    \includegraphics[width=0.4\linewidth]{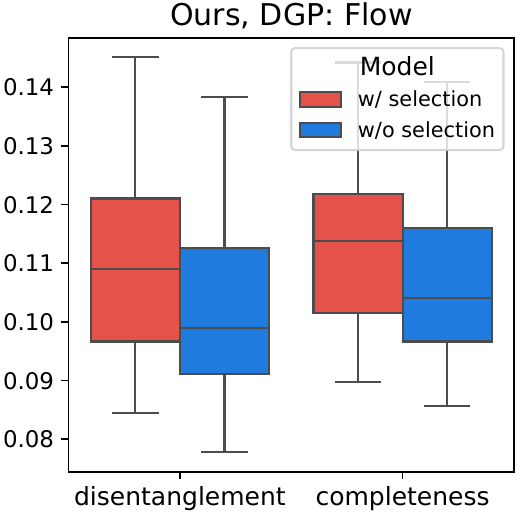}
    \caption{Ablation study on the selection procedure for Ours.}
    \label{fig:ablation-selection}
\end{figure}

We conducted an ablation study on the selection procedure for our architecture. The experiments are based on the data-generating process illustrated in Fig.~\ref{fig:flow}, where the differences in results arise depending on the selection procedure. Fig.~\ref{fig:ablation-selection} shows the change in DCI metric for our model before and after selection. The selection procedure improves disentanglement in the representation as shown in Fig.~\ref{fig:ablation-selection}.
For the graph in Fig.~\ref{fig:flow}, using all observed sources as auxiliary variables without a selection procedure breaks the conditional independence between \textbf{Water Height} and \textbf{Hole}, leading to entangled representations.

\begin{figure}
    \centering
    \includegraphics[width=1\linewidth]{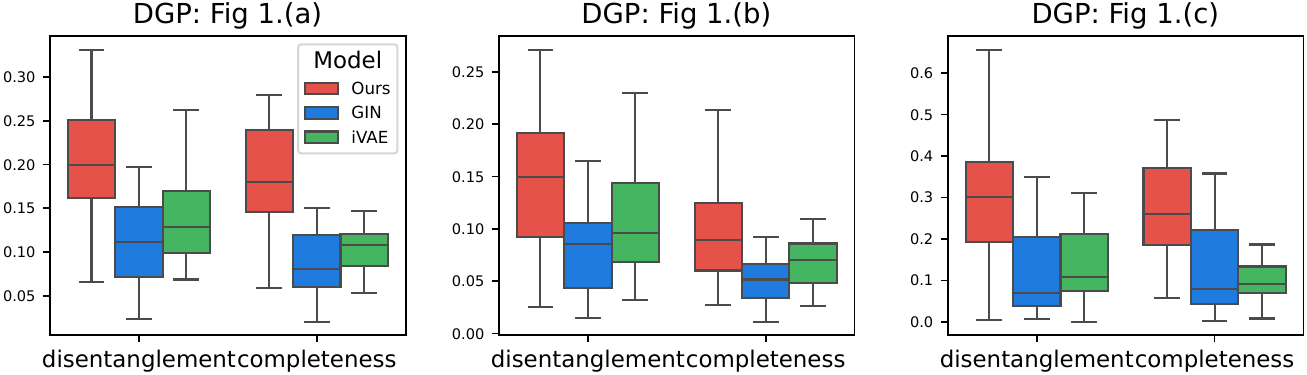}
    \caption{Comparison plot for DCI metric between Ours, GIN, and iVAE.}
    \label{fig:dci-architecture}
\end{figure}

\paragraph{Effectiveness of architecture}
To verify the effectiveness of our architecture, we choose GIN \citep{sorrenson2020disentanglementnonlinearicageneral} and iVAE \citep{pmlr-v108-khemakhem20a} as baseline models. GIN is used as the encoder in our architecture, ensuring the volume-preserving property but not designed to handle observed sources. iVAE is also not designed to handle partially observed sources. Furthermore, it does not impose any constraints on the mixing function and solely relies on a multivariate normal distribution as the prior, ensuring that each latent variable is conditionally factorizable. For a fair comparison, all experiments are conducted using the same auxiliary variables filtered through the selection procedure.

Fig.~\ref{fig:dci-architecture} demonstrates that our proposed method outperforms other approaches in terms of the DCI metric. 
Our proposed method maximizes the likelihood of a conditionally factorizable distribution for the remaining components while simultaneously excluding the information of auxiliary variables mixed with the observation $\rvx$. This prevents spurious correlations in the representation by ensuring that the information of $\rvz_u$, which is related to unobserved latents, does not mix into the representation.

\begin{figure}[t]
    \centering

    \begin{subfigure}{.95\linewidth}
        \centering
        \includegraphics[width=\linewidth]{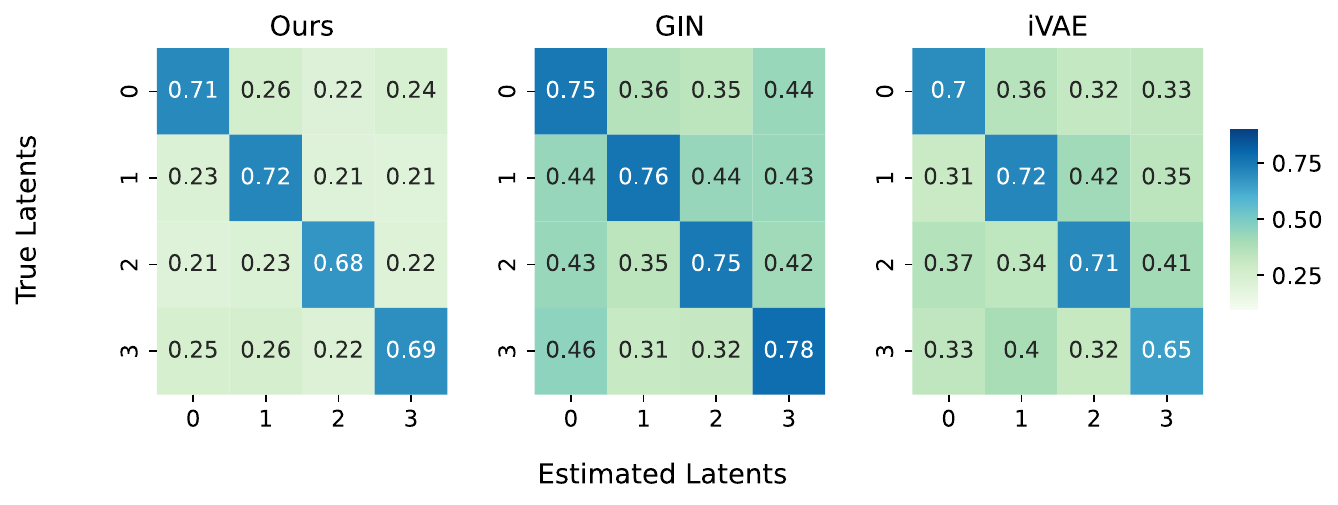}
        \caption{}
        \label{fig:mcc-a-architecture}
    \end{subfigure}

    \begin{subfigure}{.95\linewidth}
        \centering
        \includegraphics[width=\linewidth]{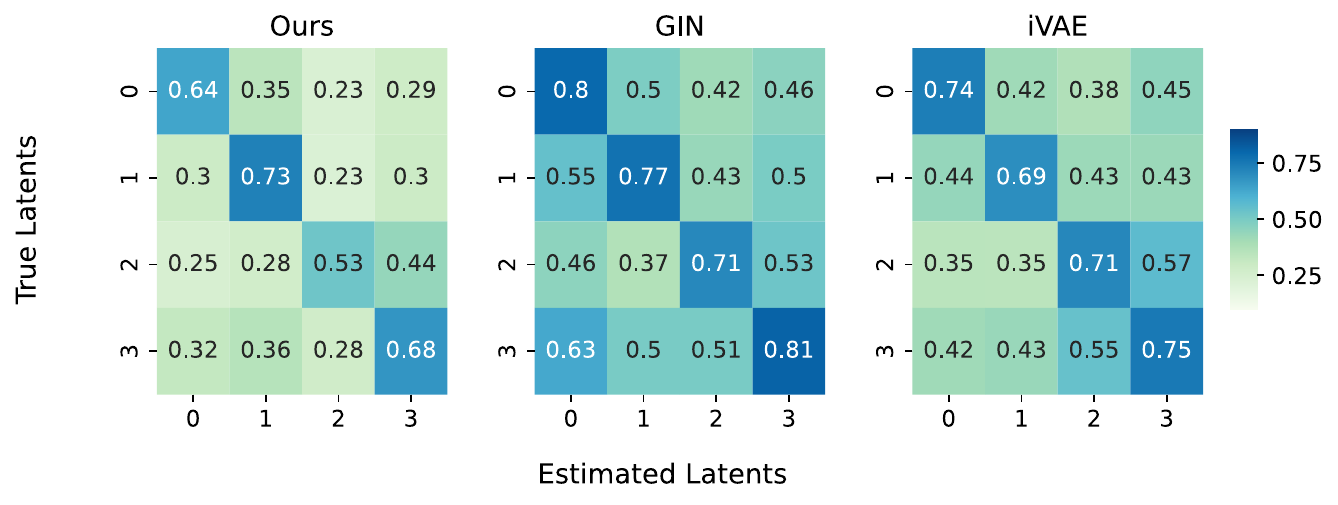}
        \caption{}
        \label{fig:mcc-b-architecture}
    \end{subfigure}

    \begin{subfigure}{.95\linewidth}
        \centering
        \includegraphics[width=\linewidth]{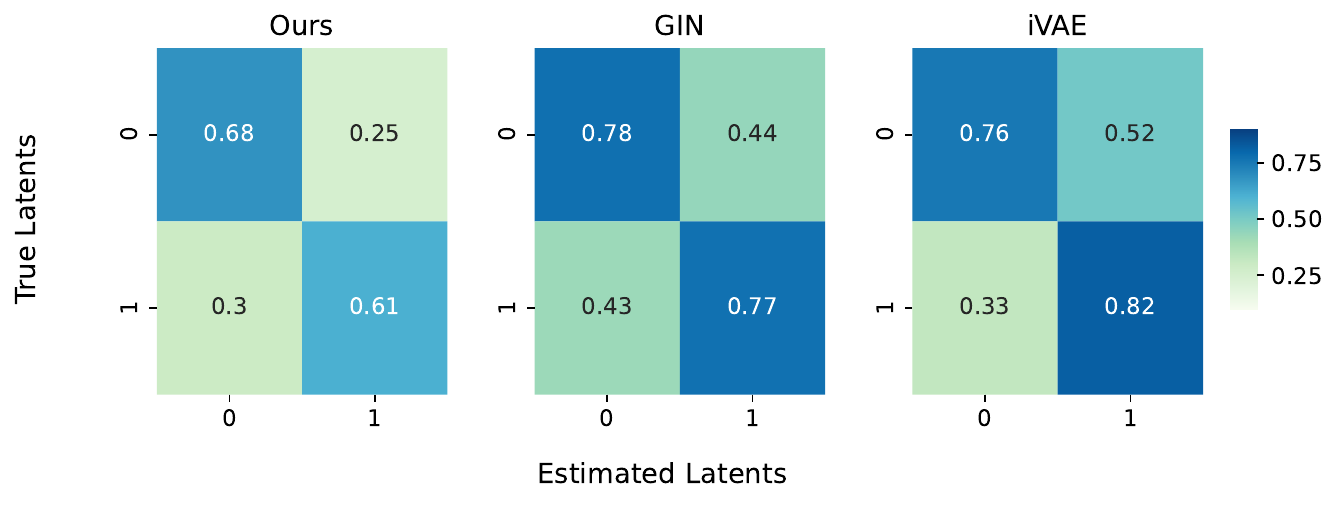}
        \caption{}
        \label{fig:mcc-c-architecture}
    \end{subfigure}

    \caption{Mean correlation matrices of Ours, GIN applying selection, and iVAE applying selection matched with the best permutation for the settings of Fig.~\ref{fig1:graph_a} (top), Fig.~\ref{fig1:graph_c} (middle), and Fig.~\ref{fig1:graph_d} (bottom).}
\end{figure}


For a more detailed examination, we analyzed the results with the MCC matrix. The proposed architecture shows a comparable MCC score (mean of diagonal terms) as GIN and iVAE (Fig.~\ref{fig:mcc-a-architecture} for the DGP of Fig.~\ref{fig1:graph_a}). However, looking at the MCC matrix, we can observe that both GIN and iVAE show high correlations with the other latents besides the true latent, even when matched with the best permutation,  This suggests that manipulating a specific dimension of the representation simultaneously affects other latents, indicating that the representation is not well disentangled. 


Considering the DGP in Fig.~\ref{fig1:graph_c}, the ideal disentanglement is that $\hat{z}_1$, $\hat{z}_2$, ($\hat{z}_3$, and $\hat{z}_4$) are conditionally independent. The result of our architecture for MCC matrix in Fig.~\ref{fig:mcc-b-architecture} represents the almost ideal disentanglement, while the other methods still show entangled results. As the conditional independence in DGP in Fig.~\ref{fig1:graph_c} does not ensure each latent to be identified, but block-identified, the MCC score  might be lower. Even in this case, GIN and iVAE, which do not consider observed sources, show a high MCC score because of spurious correlation. Likewise, on the DGP  (Fig.~\ref{fig1:graph_d}), our architecture yields a disentangled MCC matrix as expected.


\paragraph{High-Dimensional data}
We also conduct the experiments on the Pendulum and Flow datasets \citep{yang2021causalvae}. The images are generated by a latent mechanism shown in Fig.~\ref{fig:causal_graphs}. The images have a size of 4$\times$96$\times$96. For the Flow dataset, the auxiliary variable \textbf{Ball Size} is determined through the selection process. For the Pendulum dataset, all observed latents should be selected as auxiliary variables to ensure the conditional independence of the unobserved latent variables.

As illustrated in Fig.~\ref{fig:dci-highdim}, our proposed method demonstrated performance that is comparable to or superior to other models. Unlike the results on synthetic data, the GIN model exhibited strong performance because its normalizing flow-based architecture is more suitable for handling image data in terms of model capacity. The MCC matrices (Figs.~\ref{fig:flow} and \ref{fig:pendulum}) also show that our method learns disentangled representations for unobserved latent variables. It suggests that the learned representations align well with the conditional independence structure of the underlying latent graph in Figs.~\ref{fig:flow} and \ref{fig:pendulum}.

We also observed that the representations in Flow were more entangled compared to Pendulum. There exists an observed but unselected variable  $z_n$ (\textbf{Water Flow}), which introduces additional graph constraints. The graph constraints may conflict with the term enforcing conditional independence, making the learning process more challenging. Addressing this challenge remains an avenue for future work.

\begin{figure}[t]
    \centering
    \includegraphics[width=.95\linewidth]{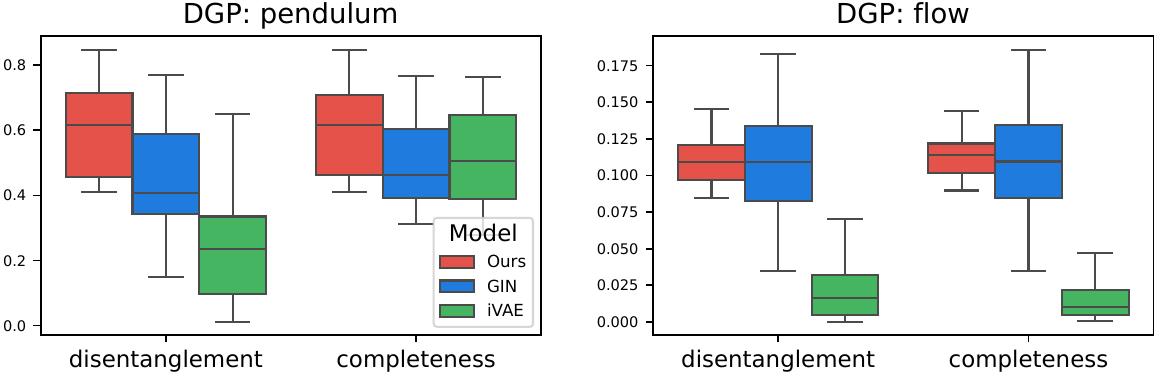}
    \caption{Comparison plot for DCI metric between Ours, GIN, and iVAE on high-dimensional data.}
    \label{fig:dci-highdim}
\end{figure}

\begin{figure}[t]
    \centering
    \includegraphics[width=.95\linewidth]{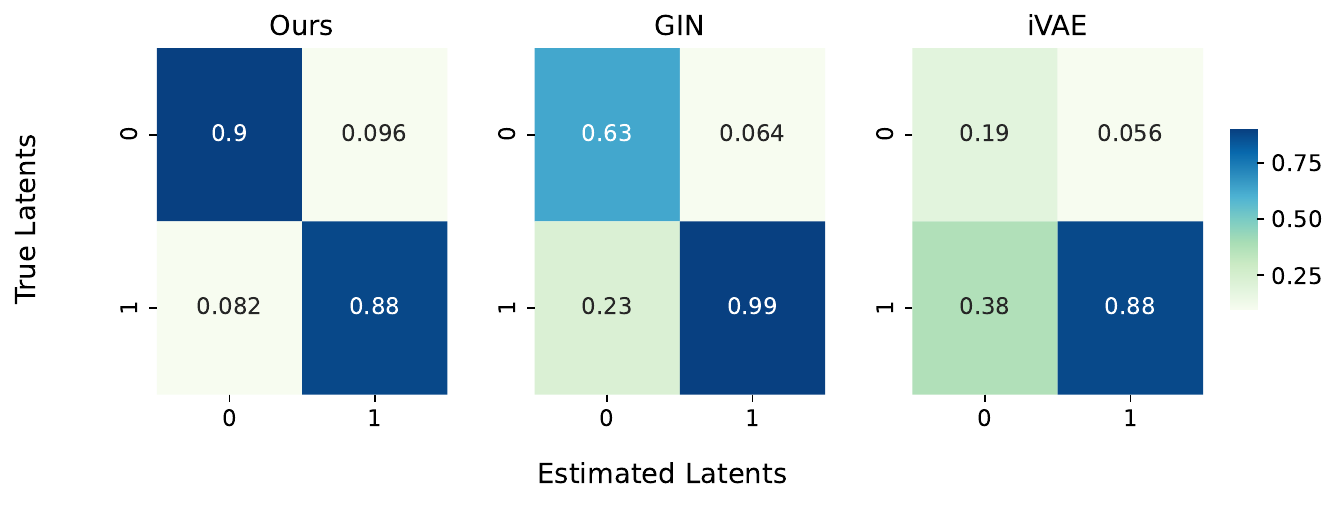}
    \caption{Mean correlation matrices of Ours, GIN applying selection and iVAE applying selection matched with the best permutation on the Pendulum dataset.}
    \label{fig:mcc-pendulum}
\end{figure}

\begin{figure}[t]
    \centering
    \includegraphics[width=.95\linewidth]{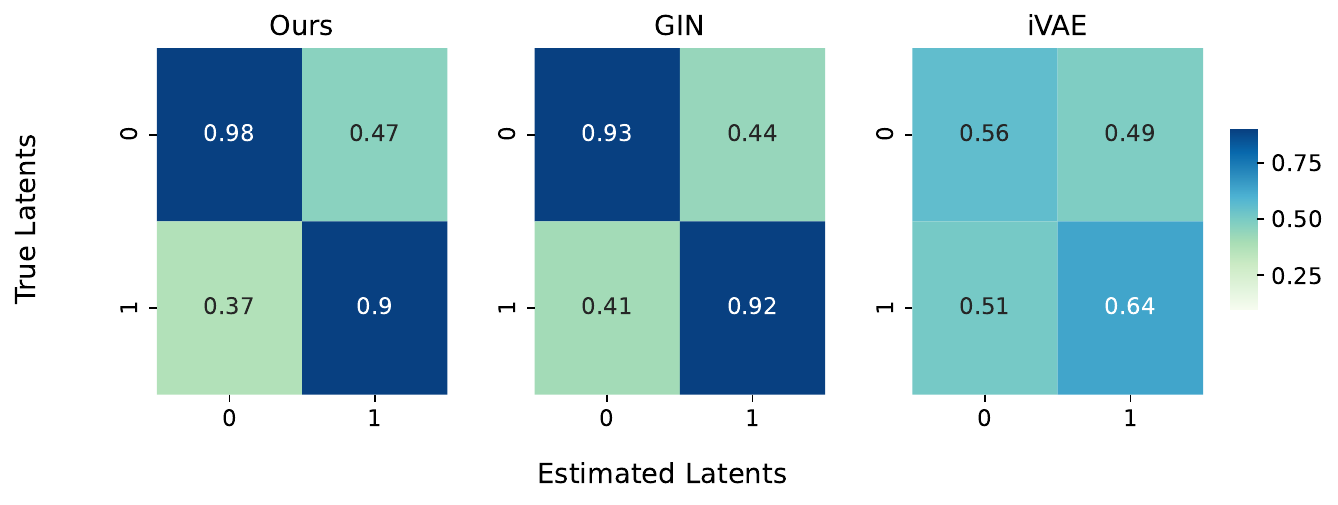}
    \caption{Mean correlation matrices of Ours, GIN applying selection and iVAE applying selection matched with the best permutation on the Flow dataset.}
    \label{fig:mcc-flow}
\end{figure}

\subsection{Latent Traverse}
\begin{figure}[t]
    \centering
    \includegraphics[width=.95\linewidth]{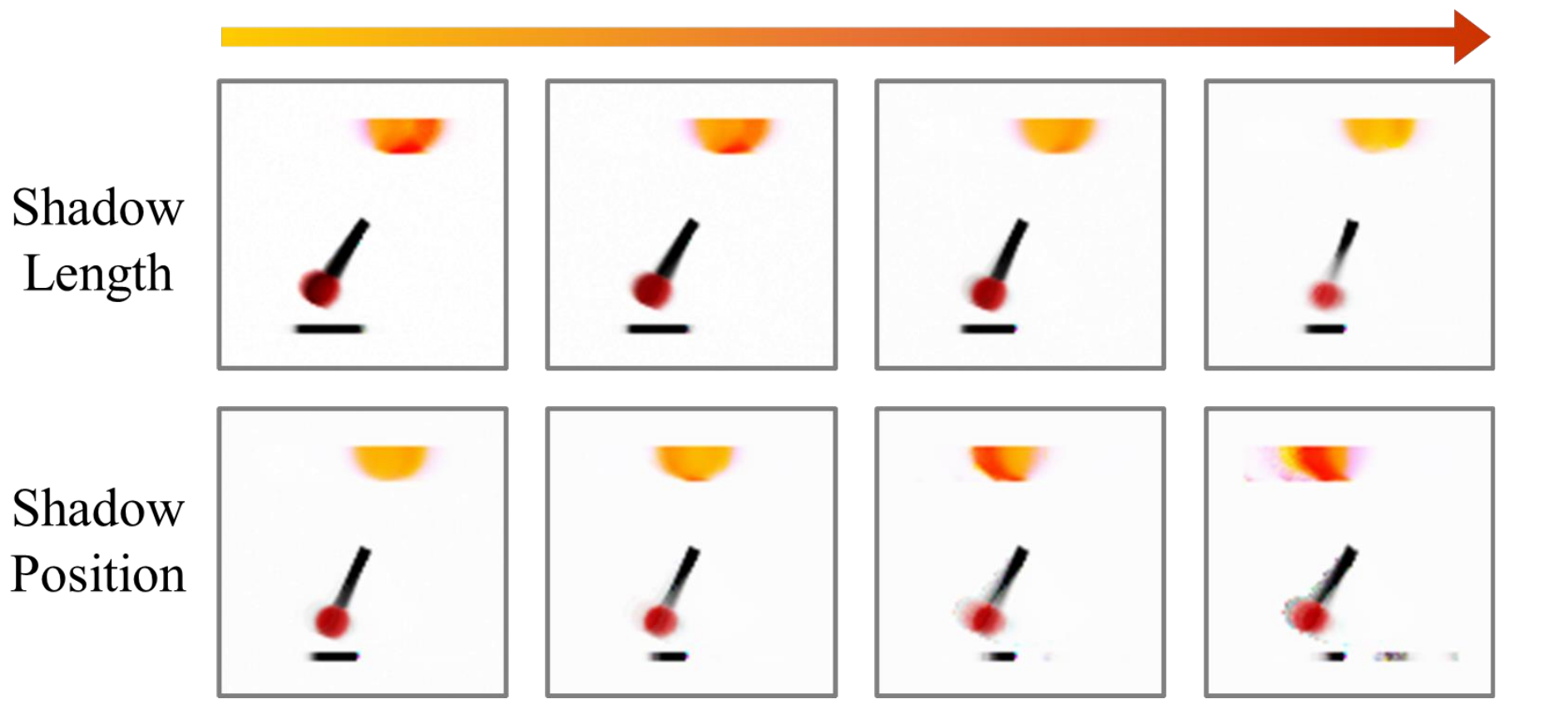}
    \caption{Latent traversal results for unobserved variables. The upper and lower rows show reconstructed images by traversing the variables for shadow length and shadow position, respectively.}
    \label{fig:latent-traverse-pendulum}
\end{figure}

For better comprehensibility, we further extend our model to the image reconstruction task and perform latent traversal to assess whether the factors have been disentangled effectively. We conducted experiments on the pendulum dataset as shown in Fig.~\ref{fig:pendulum}, choosing the pendulum angle and light position as selected variables. To efficiently extract relevant features from high-dimensional image data and visualize disentangled factors, an extra encoder-decoder architecture with an additional MSE (Mean Squared Error) loss was adopted to ensure successful compression and reconstruction of the images.

The encoder initially compresses the image into exogenous latent variables corresponding to the number of nodes in the causal graph. This set of variables is then passed through our model, generating endogenous latent variables of the same dimensionality. The decoder follows a scene-mixture approach, where each scalar from the endogenous latent variables passes through fully connected layers to generate a full-size image. The final output is then reconstructed by averaging these images.


Fig. \ref{fig:latent-traverse-pendulum} presents the results of generating counterfactual images by traversing unobserved latent variables after training our model with the reconstruction objective. As shown in the upper row, traversing the variable associated with shadow length gradually decreases its extent in the reconstructed images. Similarly, modifying the latent variable corresponding to shadow position causes the shadow to shift progressively to the right while mostly preserving other factors. The successful disentanglement of unobserved latent variables further demonstrates the model’s effectiveness in its transferability.

\section{Conclusion}
\label{sec:conclusion}
Causal representation learning (CRL) offers a promising approach for uncovering the latent variables underlying real-world systems, with significant applications in various domains such as healthcare, economics, and social sciences. Despite challenges in identifiability and the need for strict assumptions, recent advances in CRL have shown potential in addressing these limitations. Our work contributes to the field by being the first to achieve identifiability with observed sources, in which the mixing function includes auxiliary variables. We also propose a framework for selecting and exploiting with observable sources as auxiliary variables to enhance recoverability by exploiting the underlying causal structure of data generating process. Empirical results underscore that the framework is effective for identifying the latents. Furthermore, comparisons between our architecture and other approaches utilizing auxiliary variables on observational data demonstrate that our method outperforms others in identifying true latent variables, effectively mitigating spurious correlations arising from observable sources.

\subsubsection*{Acknowledgements}
This work was supported
by the IITP (RS-2022-II220953/25\%) and NRF (RS-2023-00211904/50\%, RS-2023-00222663/25\%) grant funded by
the Korean government. 

\bibliography{reference}
\onecolumn

\appendix

\section{Discussion}\label{apx:discussion}

\paragraph{Related Work} To deal with the case of \ref{fig1:graph_c}, we can partition the latent sources into conditionally independent sets, 
$\rvz_{c_{i}} (i=1,...,d)$ where $\cup_{i=1}^{d} \rvz_{c_{i}} = \{z_1,\dots,z_n\}$.
It enables a more general formulation of \ref{eq:source-factorization-independent} as \citet{zheng2023generalizing}:
\begin{equation}\label{eq:source-factorization-zheng}
p_{\rvz | \rvu}(\rvz | \rvu) = \prod_{i=1}^{n_i} p_{z_i}(z_i) \prod_{j=1}^{d} p_{\rvz_{c_j}| \rvu}(\rvz_{c_j}| \rvu).
\end{equation}
where $n_i$ is the number of mutually independent sources. \citet{zheng2023generalizing} partition all the sources into a set of mutually independent sources $\rvz_I$ and a set of variables in which do not need to be independent $\rvz_{o^-} = \cup_{i=1}^{d} \rvz_{c_{i}}$. 
In \ref{eq:source-factorization-partial}, we further generalize \ref{eq:source-factorization-zheng} into the setting with observed sources, which includes \ref{eq:source-factorization-zheng} as a special case in that $\rvu$ is independent from DGP.
\begin{equation}
p_{\rvz_{o^-} | \rvz_o}(\rvz_{o^-} | \rvz_o) = \prod_{i=1}^{n_i} p_{z_i}(z_i) \prod_{j=1}^{d} p_{\rvz_{c_j}| \rvz_o}(\rvz_{c_j}| \rvz_o).
\end{equation}
where $\rvz_o$ is observed sources and $\rvz_{o^-}$ is unobserved sources. The former term corresponds to the case without auxiliary variables, which is beyond the scope of our study and thus not considered further.

\paragraph{Metrics} After training the proposed method, we measure Disentanglement, Completeness, Informativeness (DCI) metric \citep{eastwood2018a} to measure the degree of identifiability based on Mean Correlation Coefficient (MCC) matrix which is a widely accepted metric in the literature for
measuring the degree of identifiability \citep{da26a48db8f94dc4b880adcaae51a28e}.
Specifically,
MCC metric is expressed as follows:
\[
\text{MCC}(\rvz, \hat{\rvz}) = \frac{1}{n} \max_{\sigma \in S_n} \sum_{i=1}^{n} \text{corr}(z_i, \hat{z}_{\sigma(i)}),
\]
where $\sigma \in S_n$ is a permutation of the set of indices.
If the model successfully recovers the latent variables, MCC will match estimation with the most similar distributions to true latent (i.e., the highest correlation). 
\begin{figure}
    \centering
    \includegraphics[width=.6\linewidth]{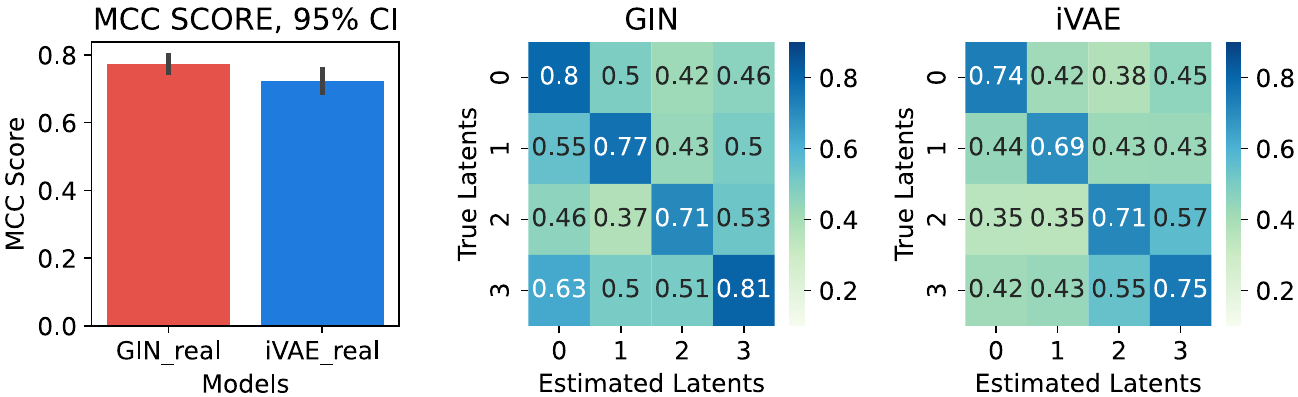}
    \caption{MCC score and mean correlation matrices of GIN and iVAE matched with the best permutation on the setting of Fig \ref{fig1:graph_c}.}
    \label{fig:mcc-example}
\end{figure}
However, the MCC metric alone is insufficient for measuring the degree of identifiability in scenarios involving partially observable sources since 
spurious correlation can arise without disentangling the information of $\rvz_o$ due to the information from the auxiliary variable $\rvz_o$ being entangled with the observation $\rvx$.

The \ref{fig:mcc-example} demonstrates the insufficiency of MCC score in  evaluating the degree of identifiability. The MCC scores of the GIN and iVAE models are around 0.7, suggesting that they recover the true latents reasonably well. However, examining the correlation matrix reveals that the estimated latents also show high correlations with dimensions other than the one with the highest correlation.
This is because existing methods do not account for cases where the mixing function includes auxiliary variables, leading to information from the auxiliary variables being entangled in the estimated latents.

Accordingly, we leverage the DCI metric \citep{eastwood2018a} to evaluate whether the learned representation correctly models the conditional independence structure of the graph without spurious correlation.
The DCI metric evaluates the performance of disentanglement, completeness, and informativeness of representation by measuring the entropy of the importance matrix (in our case, a MCC matrix) If the true sources are well identified without spurious correlation, the representation will be highly disentangled with complete information.

\paragraph{Distinction Between Observed Sources $\rvz_o$ and Observations $\rvx$}
A clear distinction between observed latent sources and observations is fundamental to our framework. The low-level observation represents the raw, directly measurable data, which is the final output of a complex generative process. This process involves the mixing of all underlying latent sources, encompassing both observed and unobserved components.

In contrast, $\rvz_o$ refers to higher-level causal factors or features that are themselves latent but can be either directly observed or reliably inferred from $\rvx$. These $\rvz_o$ variables are considered integral parts of the underlying causal system that our method aims to disentangle. For instance, in an image dataset, the image itself is $\rvx$. Within that image, the joint angles of a robot arm or the position of a light source in a pendulum system, if they can be extracted or measured, would serve as $\rvz_o$. Essentially, $\rvz_o$ represents the "causal handles" that we have some direct access to within the broader set of unobserved latent variables that contribute to $\rvx$.

\paragraph{Identifiability in Collider Case}
In theory, the latent variables $(z_1, z_3)$ and $(z_2, z_3)$ in \ref{fig1:graph_d} should be identified as distinct clusters, reflecting the underlying causal dependencies. To address this, we explicitly incorporate a graphical constraint that enforces the dependencies $z_1 \rightarrow z_3$ and $z_2 \rightarrow z_3$ during training. This constraint encourages the model to capture the correct latent structure even in the presence of colliders. Notably, if $z_3$ were partially observed or served as an auxiliary label, the graphical constraint would further facilitate disentanglement by grounding the latent representation in observable supervision. This highlights the potential of leveraging partial labels or domain knowledge to enhance identifiability in complex latent graphs.
\section{Theoretical Analysis}\label{appx:id}
Firstly, we begin with the definition of identifiability, which is the goal of nonlinear ICA and causal representation learning.
By adopting a Structural Causal Model (SCM, \cite{reason:Pearl09a}), we represent a data-generating process regarding latent sources as
\begin{equation}
    z_i = f_i(\Pa{z_i}, \epsilon_i), \quad \epsilon_i \sim p_{\epsilon_i}, 
\end{equation}
for all $i \in [n]$ 
where $\Pa{\cdot}$ represents parent nodes on a latent causal graph $\mathcal{G}$ consisting of nodes $V$ and edges $E$.
\begin{definition}\label{def1:id} (Identifiability).
Suppose the observations $\*{x}$ are generated by true latent mechanism specified by $\Theta = (\*{f},p(\bm{\epsilon}),\*{g})$ given in \ref{eqn1:generation} and \ref{eqn2:source dep}. The learned generative model parameterized by $\hat{\Theta} = \parens[\big]{\hat{\*{f}},\hat{p}(\bm{\epsilon}),\hat{\*{g}}}$ is observationally equivalent to the true model if the model distribution $p_{\hat{\Theta}}(\*x)$ matches the data distribution $p_{\Theta}(\*{x})$ for any value of $\*{x}$.
Let $A$ be an arbitrary invertible transformation. We say that the model is identifiable up to $A$ if
\begin{equation}
    p_{\hat{\Theta}}(\*{x}) = p_{\Theta}(\*{x}) \implies \hat{\*{g}} = \*{g} \circ A.  
\end{equation}
\end{definition}
Once the mixing function $g$ is identified, the latent variables can be identified up to $A$:
\begin{align*}
    \hat{\mathbf{z}} = \hat{\mathbf{g}}^{-1}(\mathbf{x})
    &= (A^{-1} \circ \mathbf{g}^{-1})(\mathbf{x})  \\
    &= A^{-1}(\mathbf{g}^{-1}(\mathbf{x})) \\
    &= A^{-1}(\mathbf{z}).
\end{align*}

\begin{proposition}\label{prop:auxiliary} (Identifiability with external information)
    Suppose the following assumptions hold:
    \begin{enumerate}
        \item The observed data and sources are generated from \ref{eqn1:generation} and \ref{eq:subspace-factorization-isa}
        \item The mixing function $g$ is volume-preserving, i.e., $|\det(\mathbf{J}_g(\rvz))| = 1$.
        \item The observable sources do not have direct edge into the observation $\rvx$, i.e., $\frac{\partial \rvx}{\partial \rvz_o} = 0$
        \item For every value of $\rvz_D$, there exists $2d+1$ values of $\rvz_o$,
        such that the $2d$ vectors $\rvw(\rvz_D, \rvz_{o_i}) - \rvw(\rvz_D, \rvz_{o_0})$ are linearly independent, where vector $\rvw(Z_D, \rvz_{o_i})$ is defined as follows:
        \begin{align*}
            \rvw(\rvz_D, \rvz_{o_i}) &= \big( \rvv(\rvz_{c_1}, \rvz_{o_i}), \dots, \rvv(\rvz_{c_d}, \rvz_{o_i}), \\
            &\quad \rvv'(\rvz_{c_1}, \rvz_{o_i}), \dots, \rvv'(\rvz_{c_d}, \rvz_{o_i}) \big)
        \end{align*}
        where
        \begin{align*}
        \rvv(\rvz_{c_j}, \rvz_{o_i}) &= \left( \frac{\partial \log p(\rvz_{c_j} \mid \rvz_{o_i})}{\partial z_{c_j}^{(l)}}, \dots, \frac{\partial \log p(\rvz_{c_j} \mid \rvz_{o_i})}{\partial z_{c_j}^{(h)}} \right), \\
        \rvv'(\rvz_{c_j}, \rvz_{o_i}) &= \left( \frac{\partial^2 \log p(\rvz_{c_j} \mid \rvz_{o_i})}{\partial (z_{c_j}^{(l)})^2}, \dots, \frac{\partial^2 \log p(\rvz_{c_j} | \rvz_{o_i})}{\partial (z_{c_j}^{(h)})^2} \right)
        \end{align*} and $\rvz_{c_j} = (z_{c_{j}^{(l)}},\dots,z_{c_{j}^{(h)}})$.
    \end{enumerate}
    Then all components of $\rvz_D$ (i.e., $\rvz_{c_i}$ where $c_i \in \{c_1,\dots,c_d\}$) is identifiable up to a subspace-wise invertible transformation and a subspace-wise permutation.
\end{proposition}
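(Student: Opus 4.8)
The plan is to reduce this statement to Proposition~\ref{prop:partial}, which has already been established, by exploiting the extra assumption (3) that the observable sources $\rvz_o$ have no direct edge into $\rvx$. First I would write down the change-of-variables formula for the density of $\rvx$ conditional on $\rvz_o = \rvz_{o_i}$. Since by assumption~(3) the mixing map restricted to the unobserved coordinates, $\rvx = g(\rvz_{o^-}; \rvz_o)$, does not in fact depend on $\rvz_o$, we have a single invertible map $g(\cdot)$ on $\rvz_{o^-}$-space, so $\log p(\rvx \mid \rvz_{o_i}) = \log p_{\rvz_{o^-}\mid\rvz_o}(g^{-1}(\rvx) \mid \rvz_{o_i}) - \log|\det \mathbf{J}_g(g^{-1}(\rvx))|$, and the Jacobian term is independent of $i$ (indeed equal to $0$ by volume preservation, assumption~(2)). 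The same holds for the estimated model $\hat g$. Observational equivalence $p_{\hat\Theta}(\rvx\mid\rvz_{o_i}) = p_\Theta(\rvx\mid\rvz_{o_i})$ for all $i$ then gives, after substituting $\rvz = g^{-1}(\rvx)$ and $\hat{\rvz} = \hat g^{-1}(\rvx) = h(\rvz)$ with $h := \hat g^{-1}\circ g$, the master equation
\begin{equation*}
\log \hat p_{\rvz_{o^-}\mid\rvz_o}\!\big(h(\rvz)\mid\rvz_{o_i}\big) + \log|\det \mathbf{J}_h(\rvz)| = \log p_{\rvz_{o^-}\mid\rvz_o}\!\big(\rvz\mid\rvz_{o_i}\big)
\end{equation*}
for every $i$.

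Next I would follow the standard differencing argument. Fix a reference index $0$ and subtract the equation for $\rvz_{o_0}$ from the equation for $\rvz_{o_i}$; this eliminates the $\log|\det\mathbf{J}_h|$ term entirely, yielding
\begin{equation*}
\log \frac{\hat p(h(\rvz)\mid\rvz_{o_i})}{\hat p(h(\rvz)\mid\rvz_{o_0})} = \log \frac{p(\rvz\mid\rvz_{o_i})}{p(\rvz\mid\rvz_{o_0})}
\end{equation*}
for $i = 1,\dots,2d$. Now invoke the conditional-independence factorization~\ref{eq:subspace-factorization-isa}, which lets each side split into a sum over the subspaces $\rvz_{c_j}$. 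Differentiating both sides with respect to the coordinates of $\rvz$ (first derivatives, giving the $\rvv$ vectors, and second derivatives along each axis, giving the $\rvv'$ vectors) produces a linear system: the $2d$ difference vectors $\rvw(\rvz_D,\rvz_{o_i}) - \rvw(\rvz_D,\rvz_{o_0})$, stacked against the cross-block entries of $\mathbf{J}_h$ and its derivatives, must vanish. Assumption~(4) says these $2d$ vectors are linearly independent, which forces all the off-block-diagonal entries of $\mathbf{J}_h$ to be zero — i.e., $h$ acts blockwise on the partition $\{\rvz_{c_j}\}$, up to a permutation of the blocks. This is exactly subspace-wise identifiability.

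The step I expect to be the main obstacle is making the reduction to Proposition~\ref{prop:partial} airtight rather than merely re-deriving its proof. Concretely, the subtlety is that Proposition~\ref{prop:partial} is phrased with $2d$ values of $\rvz_o$ and the vectors $\rvw(\rvz_{o^-},\rvz_{o_i})$ themselves being linearly independent, whereas here we have $2d+1$ values and the \emph{differences} $\rvw(\rvz_D,\rvz_{o_i}) - \rvw(\rvz_D,\rvz_{o_0})$ being independent; one must check that assumption~(3) is precisely what converts the former rank condition into the latter, because without a varying Jacobian term to absorb one needs the anchored-difference form. I would verify that, under assumption~(3), the mixing function trivially satisfies the "constrained form'' of \citet{yang2022nonlinear} required by Proposition~\ref{prop:partial} (it is constant in $\rvz_o$, hence vacuously of that form), so that Proposition~\ref{prop:partial}'s conclusion transfers verbatim; alternatively, I would carry out the differencing argument above directly, in which case the remaining routine work is just the multilinear-algebra bookkeeping identical to the proof of \citet{zheng2023generalizing}, and I would defer that to a citation rather than reproduce it.
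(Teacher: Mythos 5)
Your proposal is correct and takes essentially the same route as the paper: the paper proves this proposition directly by the anchored-differencing argument you describe (change of variables, logarithms, subtracting the $\rvz_{o_0}$ equation from the other $2d$ to eliminate the $\log|\det\mathbf{J}_{h}|$ term, factorizing via the conditional-independence partition, taking cross second derivatives across subspaces, and invoking the rank condition to force the blockwise structure), rather than by a formal reduction to Proposition~\ref{prop:partial}. Your remark that the $2d+1$-value, difference-vector form of the rank assumption is exactly what is needed because the $h$-Jacobian term must be cancelled by differencing is precisely the distinction the paper's two proofs turn on.
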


\begin{proof}
Let $h: \rvz_{o^-} \rightarrow \hat{\rvz}_{o^-}$ denote the transformation from true sources to estimated sources. Thus, we can derive $\hat{g} = g \circ h^{-1}(\rvz_{o^-})$ equivalently as
\begin{align*}
    \mathbf{J}_g(\rvz_{o^-}) = \mathbf{J}_{\hat{g} \circ h}(\rvz_{o^-}) = \mathbf{J}_{\hat{g}}(\hat{\rvz}_{o^-}) \mathbf{J}_{h}(\rvz_{o^-})
\end{align*}
by using chain rule repeatedly.  $\mathbf{J}_{h}(\rvz_{o^-})$ must be invertible and have a non-zero determinant because $\mathbf{J}_{\hat{g}}(\hat{\rvz}_{o^-})$ and $\mathbf{J}_{g}(\rvz_{o^-})$ have full column rank. The change of variable rule and Assumption 2 and 3 make the following equations hold:
\begin{align*}
     p(\rvz_{o^-}\mid\rvz_o)
     (\hat{\rvz}_{o^-}))| &=  p(\hat{\rvz}_{o^-}\mid \rvz_o).
\intertext{By taking logarithm on both sides, we can obtain}
     \log{p(\rvz_{o^-}\mid\rvz_o)} + \log{|\text{det}(\mathbf{J}_{h^{-1}}(\hat{\rvz}_{o^-}))|} &=  \log{p(\hat{\rvz}_{o^-}\mid \rvz_o)}.
\intertext{According to the Assumption 1\footnote{We only consider case of $n_i = 0$ because the identifiability of latter term in \ref{eq:source-factorization-partial} is our focus.} and $\cup_i \rvz_{c_i} = \rvz\setminus\rvz_o$, the joint log densities can be factorized as}
     \sum_{j=c_1}^{c_d}\log{p(\rvz_{j}\mid\rvz_o)} + \log{|\text{det}(\mathbf{J}_{h^{-1}}(\hat{\rvz}_{o^-}))|} &=  \sum_{j=c_1}^{c_d}\log{p(\hat{\rvz}_{j}\mid \hat{\rvz}_o)}.
\end{align*}

Thus, for $\rvz_o = \rvz_{o_0},\dots \rvz_{o_{2d}}$, we have $2d+1$ equations. Subtracting each equation corresponding to $\rvz_{o_1},\dots,\rvz_{o_2d}$ with the equation corresponding to $\rvz_{o_0}$ results in $2d$ equations:
\begin{align}\label{eq:substract}
     \sum_{i=c_1}^{c_d}(\log{p(\rvz_{i}\mid\rvz_{o_j})} - \log{p(\rvz_{i}\mid\rvz_{o_0})} ) =\sum_{i=c_1}^{c_d}(\log{p(\hat{\rvz}_{i}\mid\rvz_{o_j})} - \log{p(\hat{\rvz}_{i}\mid\rvz_{o_0})} )
\end{align}
Take the derivatives of both sides of \ref{eq:substract} with respect to $\hat{z}_k$ and $\hat{z}_v$ where $k,v \in \{1,\dots,n\}$ and they are not indices of the same subspace. 
It is clear that the RHS of \ref{eq:substract} equals to zero because $k$ and $v$ are not indices of the same subspace. 
For the i-th term of the summation on the LHS, we can get following equations:
\begin{align}\label{eq:linear-system-aux}
\sum_{l=i^{(l)}}^{i^{(h)}} \left( \left( \frac{\partial^2 \log p(\rvz_i \mid \rvz_{o_j})}{(\partial z_l)^2} - \frac{\partial^2 \log p(\rvz_i \mid \rvz_{o_0})}{(\partial z_l)^2} \right) \cdot \frac{\partial z_l}{\partial \hat{z}_k} \frac{\partial z_l}{\partial \hat{z}_v} \right. 
+ \left. \left( \frac{\partial \log p (\rvz_i \mid \rvz_{o_j})}{\partial z_l} - \frac{\partial \log p (\rvz_i \mid \rvz_{o_0})}{\partial z_l} \right) \cdot \frac{\partial^2 z_l}{\partial \hat{z}_k \partial \hat{z}_v} \right) = 0,
\end{align}
where $i_l$ and $i_h$ are the minimum and maximum indices of elements in $\rvz_i = (z_{il},\dots, z_{ih})$.
By iterating $i$ from $c_1$ to $c_d$, we can also iterate $l$ from 0 to $n$. Thus, there exists a linear system with a $2d \times 2d$ coefficient matrix.

Considering Assumption 3, the coefficient matrix of the linear system has full rank. The only solution of \ref{eq:linear-system-aux} is $\frac{\partial z_l}{\partial \hat{z}_k} \frac{\partial z_l}{\partial \hat{z}_v} = 0$ and $\frac{\partial^2 z_l}{\partial \hat{z}_k \partial \hat{z}_v}=0$. Note that $\frac{\partial z_l}{\partial \hat{z}_k}$ and $\frac{\partial z_l}{\partial \hat{z}_v}$ cannot be both zero because of invertibility of $h$. Therefore, $k$ can
only be the index of an estimated source from one independent subspace, which, together with the invertibility, leads to the conclusion that $\rvz_{o^-}$ is a composition of an invertible subspace-wise
transformation and a subspace-wise permutation of $\hat{Z}_D$ . So it is the mapping from $\hat{\rvz}_{o^-}$ to $\rvz_{o^-}$ since the
subspace-wise transformation is invertible and the inverse of a block-wise permutation matrix is still a block-wise invertible matrix. 
\end{proof}

We now establish identifiability in the presence of partially observable sources, where an auxiliary variable directly influences the observation $\rvx$ through the mixing function. This constitutes the proof of \ref{prop:partial}. 

\begin{proof} Assume observational equivalence between estimated and true model, i.e. $p_g(\rvx)=p_{\hat g}(\rvx)$. The change of variable rule makes following equations to hold:
\begin{align*}
   p(\rvx)=p(\rvz)\cdot|\text{det}(\mathbf{J}_{g^{-1}})(\rvx)| = p(\hat\rvz)\cdot|\text{det}(\mathbf{J}_{\hat{g}^{-1}})(\rvx)|
\end{align*}
Since $p(\rvz) = p(\rvz_{o^-}\mid\rvz_o)\cdot p(\rvz_o)$,
\begin{align*}
     p(\rvz_{o^-}\mid\rvz_o)\cdot p(\rvz_o)\cdot|\text{det}(\mathbf{J}_{g^{-1}})(\rvx)| =  p(\hat{\rvz}_{o^-}\mid \hat{\rvz}_o)\cdot p(\hat{\rvz}_o)\cdot|\text{det}(\mathbf{J}_{\hat g^{-1}})(\rvx)|
\end{align*}
also can hold.
Note that $p(\hat{\rvz}_o)$ can be replaced by $p(\rvz_o)$ because $\rvz_o$ is already observed.
\begin{align*}
     p(\rvz_{o^-}\mid\rvz_o)\cdot|\text{det}(\mathbf{J}_{g^{-1}})(\rvx)| =  p(\hat{\rvz}_{o^-}\mid \rvz_o)\cdot|\text{det}(\mathbf{J}_{\hat g^{-1}})(\rvx)|
\end{align*}
By taking logarithm on both sides, we can obtain
\begin{align*}
     \log{p(\rvz_{o^-}\mid\rvz_o)} + \log{|\text{det}(\mathbf{J}_{g^{-1}})(\rvx)|} =  \log{p(\hat{\rvz}_{o^-}\mid \rvz_o)} + \log{|\text{det}(\mathbf{J}_{\hat g^{-1}})(\rvx)|}.
\end{align*}
According to the Assumption 1, Assumption 2 and $\cup_i \rvz_{c_i} = \rvz\setminus\rvz_o$, the joint log densities can be factorized as
\begin{align*}
     \sum_{j=c_1}^{c_d}\log{p(\rvz_{j}\mid\rvz_o)}  =  \sum_{j=c_1}^{c_d}\log{p(\hat{\rvz}_{j}\mid \rvz_o)}.
\end{align*}
Thus, for $\rvz_o = \rvz_{o_0},\dots \rvz_{o_{2d-1}}$, we have $2d$ equations.
Take the derivatives of both sides of above equation with respect to $\hat{z}_k$ and $\hat{z}_v$ where $k,v \in \{1,\dots,n\}$ and they are not indices of the same subspace. 
It is clear that the RHS of \ref{eq:substract} equals to zero because $k$ and $v$ are not indices of the same subspace. 
For the i-th term of the summation on the LHS, we can get following equations:
        \begin{align}\label{eq:linear-system}
\sum_{l=i^{(l)}}^{i^{(h)}} \left( \left( \frac{\partial^2 \log p(\rvz_i \mid \rvz_{o})}{(\partial z_l)^2} \right) \cdot \frac{\partial z_l}{\partial \hat{z}_k} \frac{\partial z_l}{\partial \hat{z}_v} \right. 
+ \left. \left( \frac{\partial \log p (\rvz_i \mid \rvz_{o})}{\partial z_l} \right) \cdot \frac{\partial^2 z_l}{\partial \hat{z}_k \partial \hat{z}_v} \right) = 0,
        \end{align}
where $i_l$ and $i_h$ are the minimum and maximum indices of elements in $\rvz_i = (z_{il},\dots, z_{ih})$.
By iterating $i$ from $c_1$ to $c_d$, we can also iterate $l$ from 0 to $n$. 

Considering Assumption 3, the coefficient matrix of the linear system has full rank. The only solution of \ref{eq:linear-system} is $\frac{\partial z_l}{\partial \hat{z}_k} \frac{\partial z_l}{\partial \hat{z}_v} = 0$ and $\frac{\partial^2 z_l}{\partial \hat{z}_k \partial \hat{z}_v}=0$. Note that $\frac{\partial z_l}{\partial \hat{z}_k}$ and $\frac{\partial z_l}{\partial \hat{z}_v}$ cannot be both zero because of invertibility of $h$. Therefore, $k$ can
only be the index of an estimated source from one independent subspace, which, together with the invertibility, leads to the conclusion that $\rvz_{o^-}$ is a composition of an invertible subspace-wise
transformation and a subspace-wise permutation of $\hat{Z}_D$ . So it is the mapping from $\hat{Z}_D$ to $\rvz_{o^-}$ since the
subspace-wise transformation is invertible and the inverse of a block-wise permutation matrix is still a block-wise invertible matrix. 
\end{proof}

\section{Bayes-ball algorithm}\label{appx:alg}
The best known criterion for conditional independence is \textit{d-separation} \citep{GEIGER1990139}. We want to find clusters with inter-cluster d-connectedness and intra-cluster d-separation.

We exploit \textit{Bayes-ball} algorithm  to examine the conditional independence of two node sets on the given graph $\mathcal{G}$. The Bayes-ball algorithm can be extended to partition graph. It returns a set of nodes dependent to an input node set.
\begin{algorithm}[t]
    \begin{algorithmic}[1]
    \STATE \textbf{Input}: Graph $G$, Conditioning Set $C$, Observed Set $O$, Set of nodes $R$
    \STATE \textbf{Output}: Updated set of d-connected nodes $R$
    \STATE Initialize an empty set $V$ FOR visited nodes
    \STATE Initialize an empty queue $Q$
    
    \FOR{each node $n$ in $R$}
        \STATE Add $(n, \text{up})$ to $Q$
    \ENDFOR
    
    \WHILE{$Q$ is not empty}
        \STATE $(node, direction) \gets Q.pop()$
        
        \IF{$node \in V$}
            \STATE \textbf{continue}
        \ENDIF
        \STATE Add $node$ to $V$
        
        \IF{$node \in C$ \textbf{and} $direction \neq \text{down}$}
            \STATE \textbf{continue}
        \ENDIF
        
        \IF{$direction = \text{up}$}
            \FOR{each $parent$ of $node$ in $G$}
                \STATE Add $(parent, \text{up})$ to $Q$
            \ENDFOR
            \FOR{each $child$ of $node$ in $G$}
                \STATE Add $(child, \text{down})$ to $Q$
            \ENDFOR
        \ELSIF{$direction = \text{down}$}
            \STATE Initialize $check \gets \text{false}$
            \FOR{each descendant $d$ of $node$ in $G$}
                \IF{$d \in C$}
                    \STATE $check \gets \text{true}$
                    \STATE \textbf{break}
                \ENDIF
            \ENDFOR
            
            \IF{$node \in C$ \textbf{or} $check = \text{true}$}
                \FOR{each $parent$ of $node$ in $G$}
                    \STATE Add $(parent, \text{up})$ to $Q$
                \ENDFOR
            \ELSE
                \FOR{each $child$ of $node$ in $G$}
                    \STATE Add $(child, \text{down})$ to $Q$
                \ENDFOR
            \ENDIF
        \ENDIF

        \IF{$node \notin C$ and $node \notin O$}
            \STATE Add $node$ to $R$
        \ENDIF
    \ENDWHILE
    
    \STATE \textbf{return} $R$
    \end{algorithmic}
\caption{Bayes Ball Algorithm for d-connected nodes}\label{alg:bayes_ball}
\end{algorithm}

\section{Experimental Details}\label{appx:imple}
The implementation of the experiments is based on \citet{liang2023causal}. The following Table~\ref{tab:hyper} are hyperparameters for learning Ours, GIN and iVAE.

\begin{table}[t]\footnotesize 
    \centering
    \caption{Hyperparameters for different models.}
    \label{table:subtable}
    \begin{tabular}{c}
        \begin{subtable}{0.3\textwidth}
            \centering
            \begin{tabular}{@{}lc@{}}
                \toprule
                \multicolumn{2}{c}{\textbf{Ours}} \\
                \midrule
                LR scheduler & Cosine \\
                Learning rate & 0.01 \\
                Number of flows & 8 \\
                Optimizer & Adam \\
                Batch size & 1024 \\
                Training epochs & 20 \\
                \bottomrule
            \end{tabular}
            \caption{Synthetic data}
        \end{subtable}
        \hfill
        \begin{subtable}{0.3\textwidth}
            \centering
            \begin{tabular}{@{}lc@{}}
                \toprule
                \multicolumn{2}{c}{\textbf{GIN}} \\
                \midrule
                LR scheduler & - \\
                Learning rate & 0.01 \\
                Number of flows & 8 \\
                Optimizer & Adam \\
                Batch size & 1024 \\
                Training epochs & 20 \\
                \bottomrule
            \end{tabular}
            \caption{Synthetic data}
        \end{subtable}
        \hfill
        \begin{subtable}{0.3\textwidth}
            \centering
            \begin{tabular}{@{}lc@{}}
                \toprule
                \multicolumn{2}{c}{\textbf{iVAE}} \\
                \midrule
                Number of layers & 3 \\
                Learning rate & 0.0001 \\
                Hidden dim & 4096 \\
                Optimizer & Adam \\
                Batch size & 32 \\
                Training epochs & 20 \\
                \bottomrule
            \end{tabular}
            \caption{Synthetic data}
        \end{subtable}
    \end{tabular}

    \vspace{10pt}  

    \begin{tabular}{c}
        \begin{subtable}{0.3\textwidth}
            \centering
            \begin{tabular}{@{}lc@{}}
                \toprule
                \multicolumn{2}{c}{\textbf{Ours}} \\
                \midrule
                LR scheduler & Cosine \\
                Learning rate & 0.001 \\
                Number of flows & 8 \\
                Optimizer & Adam \\
                Batch size & 1024 \\
                Training epochs & 50 \\
                \bottomrule
            \end{tabular}
            \caption{High-dimensional data}
        \end{subtable}
        \hfill
        \begin{subtable}{0.3\textwidth}
            \centering
            \begin{tabular}{@{}lc@{}}
                \toprule
                \multicolumn{2}{c}{\textbf{GIN}} \\
                \midrule
                LR scheduler & - \\
                Learning rate & 0.001 \\
                Number of flows & 8 \\
                Optimizer & Adam \\
                Batch size & 1024 \\
                Training epochs & 40 \\
                \bottomrule
            \end{tabular}
            \caption{High-dimensional data}
        \end{subtable}
        \hfill
        \begin{subtable}{0.3\textwidth}
            \centering
            \begin{tabular}{@{}lc@{}}
                \toprule
                \multicolumn{2}{c}{\textbf{iVAE}} \\
                \midrule
                Number of layers & 3 \\
                Learning rate & 0.0001 \\
                Hidden dim & 4096 \\
                Optimizer & Adam \\
                Batch size & 1024 \\
                Training epochs & 80 \\
                \bottomrule
            \end{tabular}
            \caption{High-dimensional data}
        \end{subtable}
    \end{tabular}
    \label{tab:hyper}
\end{table}



\section{Limitations and Future Works}\label{appx:discussion}

\paragraph{Prospect to Relax Known DAG Assumption}
Our framework assumes that the full DAG among latent variables is known, which may limit applicability in real-world settings. Nevertheless, there is meaningful potential to relax this assumption. For instance, recent work such as \citet{ali2024crid} explores causal representation learning under non-Markovian graph structures using interventional data. These approaches, like ours, leverage graphical modeling and suggest promising future directions where partial knowledge or data-driven structure learning \citep{reason:Pearl09a,spirtes2000causation,zhang2008completeness,kocaoglu2019characterization,hwang2023discovery,li2023causal,kim2024causal,hwang2024finegrained} could be integrated into our framework to handle more flexible structural settings.

\paragraph{Volume-Preservation and Dimensionality}
We imposed a volume-preserving constraint on the mixing function, ensuring that the log-determinant term remains constant and does not interfere with the training objective. While this constraint offers a concrete and tractable solution, it is not the only possible approach, and further exploration into alternative constraints or relaxation of this assumption is warranted. Additionally, although volume preservation implies matching dimensions between latent and observed spaces, empirical findings from Appendix C of \citet{yang2022nonlinear} indicate that the GIN framework can perform implicit dimensionality reduction by shrinking the variance of redundant dimensions. This robustness may also extend to our setting, suggesting practical flexibility beyond the strict theoretical requirement.

\paragraph{Partitioning of Unobserved}
The identifiability guarantees rely on the assumption that latent variables can be partitioned into conditionally independent clusters given auxiliary variables. When this assumption fails—such as when all latents form a single dependent cluster—the theoretical conditions for recovery no longer hold, leading to unidentifiability. A more precise characterization of graphical conditions that permit non-trivial recovery in these edge cases is essential for extending the method's applicability. Formalizing such criteria remains an open question and will be pursued in future work.

\paragraph{Connection to Concept-based Methods}
A recent approach, called concept-based representation learning \citep{rajendran2024causal}, aims to recover the high-level human-interpretable concepts, instead of true latent generative factors. We consider extending our framework and results to concept-based representation learning as a promising future direction. Another potentially related literature is concept bottleneck models \citep{koh2020concept,yuksekgonul2022post,jeon2024localityaware,oikarinen2023labelfree,hwang2025blackbox}, where the goal is to build inherently interpretable models that make predictions based on the human interpretable concepts. Considering partially observable sources as labeled concepts, we believe that it would be another promising future direction to incorporate our framework into concept bottleneck models.

\end{document}